\newtheorem{myTheo}{Theorem}
\newtheorem{proof}{Proof}
\newtheorem{definition}{Definition}
\begin{document}

\title{Directed Link Prediction using GNN \\
with Local and Global Feature Fusion 
}

\author{Yuyang~Zhang, Xu~Shen,
	Yu~Xie,	Ka-Chun~Wong, Weidun Xie, and Chengbin~Peng$^*$
\IEEEcompsocitemizethanks{\IEEEcompsocthanksitem $\ \ $Yuyang Zhang, Xu Shen, Yu Xie and Chengbin Peng are with College of Information Science and Engineering, Ningbo University, Ningbo, 315211, China.\protect
	\IEEEcompsocthanksitem $\ \ $Ka-Chun Wong and Weidun Xie are with City University of Hong Kong, Hong Kong, 999077, China.\protect\\
	$*$Corresponding author: pchbin@gmail.com}	
}

\markboth{Journal of \LaTeX\ Class Files,~Vol.~14, No.~8, August~2015}
{Shell \MakeLowercase{\textit{et al.}}: Bare Demo of IEEEtran.cls for Computer Society Journals}

\IEEEtitleabstractindextext{
\begin{abstract}
Link prediction is a classical problem in graph analysis with many practical applications. For directed graphs, recently developed deep learning approaches typically analyze node similarities through contrastive learning and aggregate neighborhood information through graph convolutions. 
In this work, we propose a novel graph neural network (GNN) framework to fuse feature embedding with community information. We theoretically
demonstrate that such hybrid features can improve the performance of directed link prediction. To utilize such features efficiently, we also propose an approach to transform input graphs into directed line graphs so that nodes in the transformed graph can aggregate more information during graph convolutions. 
{Experiments on benchmark datasets show that our approach outperforms the state-of-the-art in most cases when 30\%, 40\%, 50\%, and 60\% of the connected links are used as training data, respectively}. 
\end{abstract}

\begin{IEEEkeywords}
Link Prediction, Directed Graphs, Line Graphs, Node Embedding, Community Detection.
\end{IEEEkeywords}}

\maketitle

\IEEEdisplaynontitleabstractindextext

\IEEEpeerreviewmaketitle

\IEEEraisesectionheading{\section{Introduction}\label{sec:introduction}}
Link prediction is essential in graph analysis, in which a model learns from graph data and makes predictions about whether there are connections between pairs of nodes \cite{butun2019pattern,wang2017predictive,kumar2019level,lee2021collaborative,guven2020applying}. It can have many applications, such as recommending friends in social networks \cite{gupta2021correcting,butun2018extension,liu2019link}, recommending products in commercial networks \cite{zhang2019domain,yin2022se}, analyzing interaction in chemical reaction networks \cite{yadati2020nhp}, recovering missing facts in graphs \cite{schlichtkrull2018modeling,chen2018pme} and reconstructing metabolic networks \cite{oyetunde2017boostgapfill,zhang2018link}, etc. As undirected graphs can be considered as a specific subset of directed graphs, many methods have been proposed to predict directed links.

{

Traditional approaches usually develop various indices with heuristic approaches to analyze the connection probability utilizing structural features. Zhou et al. improved link prediction accuracy by taking differences among neighboring nodes into account \cite{aziz2020link}. B{\"u}t{\"u}n et al. propose a pattern-based supervised approach to enhance link prediction accuracy in directed complex networks \cite{butun2019pattern}. Zhang et al. propose to combine preferential attachment indices to estimate connection likelihood between two nodes \cite{zeng2016link}. Chen et al. propose an enhanced local path link prediction similarity index based on the degree of the nodes at both ends of the path \cite{chen2021link}. Zheng et al. propose simrank-based indexing to predict links in large graphs \cite{zheng2013efficient}. Guo et al. introduce a novel method to address link prediction in directed networks by enhancing the Sorensen index and incorporating topological nearest-neighbors similarity, effectively utilizing network node information \cite{guo2023link}. Ghorbanzadeh et al. introduce a novel link prediction method based on common neighborhoods, addressing the issues of neglecting nodes without shared neighbors and connection direction in existing methods \cite{ghorbanzadeh2021hybrid}. As nodes in a community are more densely connected \cite{torri2023financial,berahmand2022graph} than those between different communities, some traditional approaches also adopt such information \cite{shang2020novel,ghasemian2020stacking,newman2018network}. For example, some traditional approaches propose using it to help infer connection probabilities in undirected networks \cite{PhysRevE.85.056112}, long-circle-like networks \cite{shang2022link}, and homology networks \cite{peixoto2022disentangling}.

}

{
Deep learning approaches can leverage deep neural networks to obtain more useful node representations automatically \cite{tong2021directed,zhang2018link,wu2021directed}. For undirected graphs, Zhang et al. unify a number of traditional approaches into a decaying framework using graph neural networks and predict undirected links \cite{zhang2018link}. Wu et al. extend Graph Convolutional Networks (GCNs) using social ranking theory, effectively addressing directed link prediction by accurately propagating directional information \cite{wu2021directed}. Wang et al. propose a weighted symmetric graph embedding approach for undirected link prediction \cite{wang2022weighted}. Ayoub et al. introduce a graph-based link prediction method using node similarity measures and path depth for undirected link prediction \cite{ayoub2020accurate}. Cai et al. also proposed undirected line graphs to help the analysis \cite{cai2021line}, in which each link in the original undirected graph is converted into a node, and the converted nodes are connected if their corresponding links in the original graph share a common endpoint \cite{choudhary2021atomistic,zhu2019relation,chen2017supervised}.

{Non-deep learning approaches are hindered by complex feature engineering and weak generalization capability, especially in complex or large-scale networks. Thus, introducing deep learning methods for link prediction in directed graphs is essential.} Kipf et al. propose a method to aggregate and propagate the directional information across layers of a GCN model to predict directed links \cite{kipf2016variational}. Tong et al. design a directed graph data augmentation method called Laplacian perturbation to improve the node represent performance \cite{tong2021directed}. Wu et al. integrate Graph Generative Adversarial Networks with Graph Contrastive Learning to predict directed links\cite{wu2023graph}. Yi et al. propose a novel autoencoder model based on high-order structures, integrating motif adjacency matrix learning and an attention scheme, to enhance link prediction accuracy in directed networks \cite{yi2022link}.

}

{However, when predicting links, approaches that rely solely on local information can be vulnerable to local data perturbations, while those using only global information may overlook important local features. To enhance information utilization from both local and global scopes, we propose to combine community information and line graph features, which have rarely been explored in deep learning-based directed link prediction approaches. }

Our contributions are as follows:
\begin{enumerate}
	\item {For directed graphs, we propose a novel graph neural network with hybrid-feature fusion that takes community information into account { } and proves its effectiveness theoretically.
	}
	
	\item 
We proposed a novel feature fusion approach that is able to transform directed graphs into directed line graphs, which can aggregate more information when predicting target links.

	\item Experiments with different datasets and different training sample proportions demonstrate that our proposed model achieves better results and converges with fewer epochs than other state-of-the-art models.
\end{enumerate}

\section{Related Work}
\subsection{Link Prediction}
There are many methods to predict links in the graph. These methods are mainly divided into three categories: heuristic methods, node embedding methods, and deep learning methods.

Heuristic approaches mainly rely on the neighboring nodes near the target node to calculate the similarity scores \cite{lu2015toward,yin2017evidential}.
The higher scores are, the more likely the nodes are linked to each other. The classical algorithms include Katz \cite{katz1953new}, rooted PageRank \cite{brin2012reprint}, and SimRank \cite{jeh2002simrank}. The performance of these algorithms highly relies on the number of neighborhood nodes.

Node embedding is inspired by natural language processing \cite{zhang2021adaptive}. The node sequences are generated by the skip-gram method, random walks, and other methods, so the information of the nodes in the graph can be learned to make predictions about the links. The classical algorithms include deepwalk \cite{perozzi2014deepwalk}, LINE \cite{tang2015line}, and node2vec \cite{qiu2018network}, which require relatively dense node connections. 

More and more methods based on deep learning have been proposed recently. Deep learning can learn node features, graph structure, and other information to predict links through neural networks. {The classical methods include SEAL \cite{zhang2018link}, DGGAN \cite{zhu2021adversarial}, DiGAE \cite{kollias2022directed}, LGLP \cite{cai2021line}, a pattern-based method \cite{butun2019pattern}, Gravity Graph VAE, and Gravity Graph AE \cite{salha2019gravity}.} However, these methods cannot take into account both local and global information. In this work, we propose to extract the target link and its neighbors to form a directed subgraph and learn local information. Supplement the global information with community information and node embedding, then transform a directed subgraph into a directed line graph to learn link features and link-to-link relationships.

\subsection{Graph Convolutional Network}
With the development of the graph neural network (GNN), learning graphical features using the graphical neural network has become an effective solution for a variety of graphical analysis tasks \cite{yuan2020structpool,ying2018hierarchical,kipf2016semi,li2024guest,bai2024haqjsk,li2024educross,li2024multimodal}. The graph convolutional network is an important part of the graph neural network and has an excellent performance in graph analysis. GCN is developed on the graph Fourier transform. The development of GCNs is mainly divided into two aspects. One is spectral-based methods, such as GCNs \cite{kipf2016semi} and ChebNet \cite{defferrard2016convolutional}, these kinds of methods have a solid foundation in spectral graph theory. There is also a spatial-based method, such as Graph-SAGE \cite{hamilton2017inductive}, MoNet \cite{monti2017geometric}, and MPNN \cite{gilmer2017neural}, which can be seen as the propagation of node information on the graph, and this type of method is simpler to understand. 

In this work, we propose to transform links in directed graphs into nodes in directed line graphs to improve the effectiveness of feature aggregation in GCNs.

\subsection{Node Embedding Learning}
Contrastive node embedding learning can learn representations from the perspective of positive and negative sample graphs \cite{ you2020graph,you2021graph}. Earlier approaches can suffer from two main problems. First, the structure of the input graph is changed in data augmentation for graph data \cite{you2020graph, zhu2021graph}, misleading the information transfer scheme in the subsequent comparative perspective learning. This change of the graph structure changes the graph structure information, especially for directional graphs, which have a greater impact due to the direction of their links. The second problem is that graph contrastive learning uses parameters defined in advance \cite{tian2020makes,xiao2020should}, which often does not fully utilize the contrastive information obtained from graph data augmentation and may end up with incomplete information acquisition. For the first problem, Directed Graph Contrastive Learning (DiGCL) \cite{tong2021directed} designs a Laplacian perturbation, and the perturbation can provide contrastive information without changing the structure of directed graphs. For the second problem, a contrastive learning framework for directed graphs is proposed, which automatically selects the appropriate contrastive views and is trained using multi-task course learning \cite{pentina2015curriculum,sarafianos2018curriculum}.

\section{METHODOLOGY}
\subsection{Problem Formulation}
{
In graph analysis, a directed graph can be represented as ${G=(V, E)}$, where ${V = \{v_i, v_j, ..., v_n}\}$ denotes the set of nodes, and ${E \subseteq V \times V}$ denotes the set of directed links between nodes. $\hat{d}$ is the average degree for a node. ${A}$ is the adjacency matrix of the directed graph. If there exists a directed link pointing from node ${v_i}$ to node ${v_j}$, then ${A_{i,j} = 1}$, and otherwise, ${A_{i,j} = 0}$. Without loss of generality, we formulate the problem to determine whether there is a link from node $v_x$ to node $v_y$, and we name such a link as the target link and the two corresponding nodes as the target nodes. We define the starting node of a link as the source node and the ending node of a link as the destination node.
 }
\subsection{Overall Framework}
{In this work, we propose Feature Fused Directed-line-graph model (FFD) to integrate local labeling information and global embedding information to predict links, and we also propose a novel node fusion method.
The overall approach for directed graphs is illustrated in Fig. \ref{main}.
Given a directed graph, we first obtain path-based node labels, community information, and node embeddings with different scopes, and then aggregate these features.
We transform the directed graph with aggregated features into a directed line graph. Finally, we perform graph convolutions on the directed line graph and generate node features to determine the existence of links between target nodes.
}

\begin{figure*}[tbp]
	\centering
	\includegraphics[width=7in,height=3in]{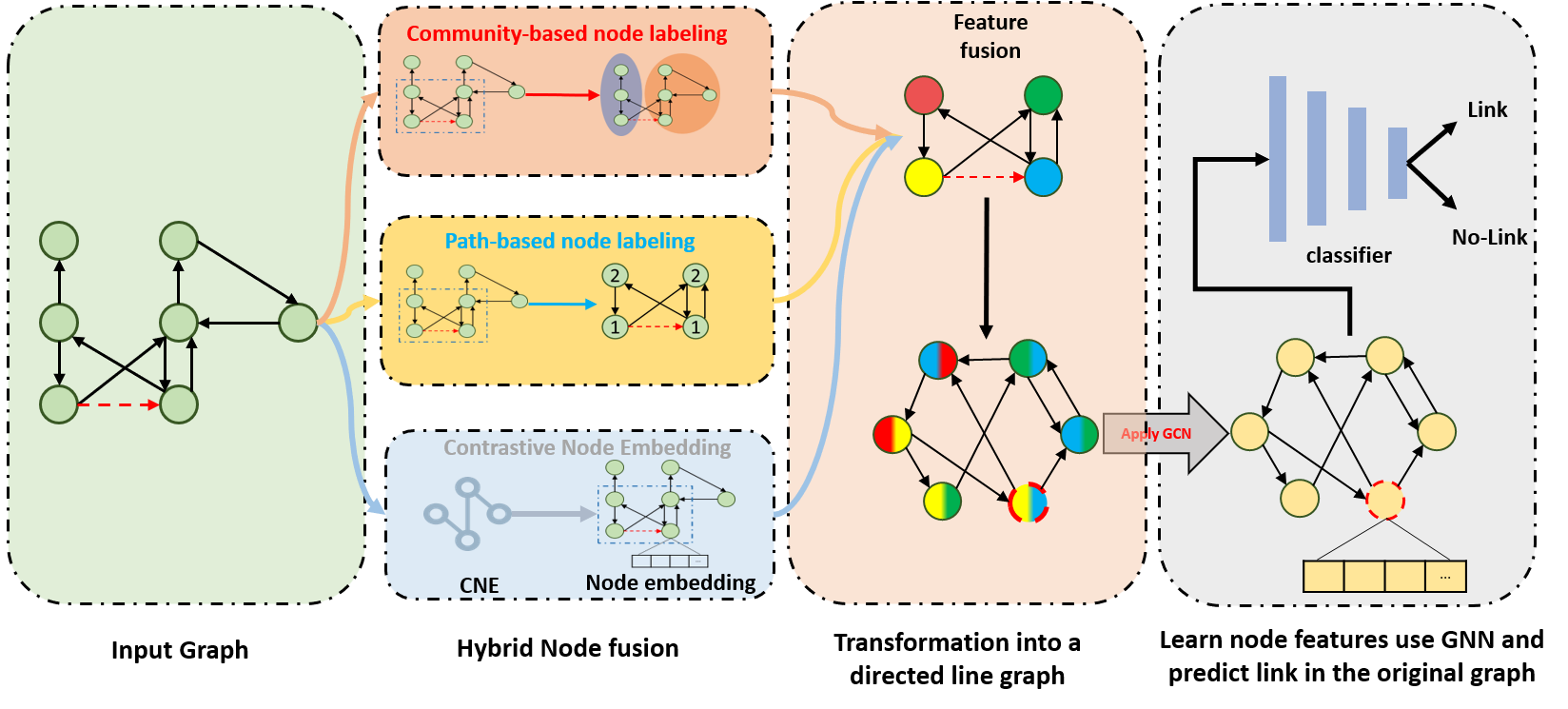}
	\DeclareGraphicsExtensions.
	\caption{An illustration of the proposed FFD. The target link is marked as a directed red dash line, and the corresponding target node in the directed line graph is marked as red dash circles. The main steps are as follows: (1) Obtain the path-based label information of the node, the community information of the node, and the node embedding. (2) Combine different node information and transform a directed subgraph graph into a directed line graph. (3) Propagate features using graph convolution networks and predicts the target link with the corresponding node feature from the directed line graph.}
	\label{main}	 
\end{figure*}

\subsection{Feature Fusion Directed line graph modeling}

\subsubsection{Hybrid Node fusion}

To obtain local information on target links, we propose a hybrid node fusion method. In addition, besides existing labeling methods, we propose a community-based node labeling method and demonstrate theoretically that it can help improve prediction accuracy when integrating with others.

\textbf{Path-based Node Labeling}

Path-based node labeling methods focus on the neighborhood information of target nodes, and they have been adopted in link prediction tasks.
Such labeling methods typically rely on subgraphs generated by a few steps of hopping from target nodes \cite{cai2021line}, as computing paths from the whole graph can be very expensive.

Without loss of generality, we consider a path-based node labeling algorithm $\zeta _{P}$ as follows
\begin{align}
O_P=\zeta _{P}(A)
\label{labelA},
\end{align}
where the input $A$ is an adjacency matrix, and each row of the output $O_P\in \{0,1\}^{N\times D_P}$ indicates the path-based labels of a corresponding node. $D_P$ is the size of output features.

\textbf{Community-based Node Labeling} 

{Community structures can reflect the probability of node connections and can help nodes to learn features from a medium scope, and many existing approaches \cite{PhysRevE.70.066111,zhao2021community} can be used.
We define a community detection algorithm $\zeta _{C}$ as follows}
\begin{align}
	O_C=&\zeta _{C}(A),\label{labelC}
\end{align}
where the input $A$ is an adjacency matrix, and each row of the output $O_C\in \{0,1\}^{N\times D_C}$ indicates the community affiliation of a node. {$D_C$ is the number of communities. }

\textbf{Contrastive Node Embedding}

{Contrastive node embedding takes all the nodes into account, aiming to learn representations of nodes such that similar nodes are brought closer in the embedding space, while dissimilar nodes are pushed apart \cite{you2020graph}. Contrastive node embedding can help nodes to learn features from a much larger scope, from the whole graph, and many existing approaches \cite{tong2021directed,zhang2022costa} can be used. Without loss of generality, we define contrastive node embedding $\zeta _{E}$ as follows}
\begin{align}
	O_E=\zeta _{E}(A)
	\label{labelE},
\end{align}
where the output is $O_E \in \mathbb{R}^{N\times D_E}$ and {$D_E$} is the size of embedded features.

{
\textbf{Hybrid Node fusion}

Path-based node labels, community-based node labels, and contrastive embedding features can be considered complementary features from different scopes.
To enhance the prediction performance, we can combine these kinds of features and feed them into the next step
\begin{align}
	O_H=Combination(O_P,O_C, O_E)\label{label},
\end{align}
{where $Combination(\cdot)$ denotes feature combination function, $O_H \in \mathbb{R}^{N\times D_H}$ with $D_H$ as the size of hybrid features. $O_H[i]$ denotes the hybrid node fusion of node $v_i$. By default, we use concatenation function as the combination function, as many other deep learning approaches do \cite{hamilton2017inductive,vaswani2017attention,velickovic2017graph,xu2018representation}.
}}

\textbf{Effectiveness of Community-based Node Labels}

{As community-based node labeling has rarely been used in directed link prediction problems with graph neural networks, in this work, we theoretically demonstrate its effectiveness.
For ease of analysis, we consider that input networks are generated from a stochastic blockmodel (SBM) \cite{tang2022asymptotically} defined as follows. Many studies provide theoretical proof based on SBM and apply it to different real-world datasets\cite{tang2022asymptotically,braun2022iterative,karrer2011stochastic}. We also provide theoretical proof based on SBM.}

\begin{definition}
In the adopted SBM, there are total $K$ communities of the same size. The connection probability between nodes in the same community is $p$ and that between nodes from different communities is $q$.
\end{definition}
{
\begin{definition}
A non-community-based link prediction algorithm is defined as as $\zeta _{\overline{C}}$.
\end{definition}
}
{
We consider a procedure $f$ that can map outputs (e.g, $O_P$, $O_C$, $O_E$, or their combinations) from different algorithms into link prediction results, given target nodes 
$v_x$ and $v_y$. If a link is predicted, the output is greater than zero, and otherwise, the output is less than zero. With respect to an specific algorithm, for example, $\zeta _{C}$, we abbreviate $f(\zeta_{C}(A),v_x,v_y)$ with $f(\zeta_{C})$. 
With a non-community-based labeling algorithm, we use the following expression to represent the prediction with hybrid node fusion:
\begin{align}
	W_{\overline{c}}f(\zeta _{\overline{C}})+W_c f(\zeta _{C}),
	\end{align} 
where $W_{\overline{c}}$ and $W_c$ are learnable weights. 

}

{We use $\mathbb{E}$ to represent the probability of correct predictions given ground-truth community labels. Thus, the expected probability for non-community-based approaches can be defined as follows}{, the expected prediction probability of $\zeta _{\overline{C}}$ over different cases is defined follows} 
\begin{align}
	\begin{split}
		\mathbb{E}(f(\zeta _{\overline{C}}))
		=&\left \{
		\begin{array}{ll}
			E_{\overline{C},U},                   & \text{if $\ f(\zeta _{\overline{C}})<0$ and $A_{x,y}=0$,} \\
			1-E_{\overline{C},U},    & \text{if $\ f(\zeta _{\overline{C}})>0$ and $A_{x,y}=0$,} \\
			1-E_{\overline{C},L},                                & \text{if $\ f(\zeta _{\overline{C}})<0$ and $A_{x,y}=1$,} \\
			E_{\overline{C},L},    & \text{if $\ f(\zeta _{\overline{C}})>0$ and $A_{x,y}=1$.} \\
		\end{array}
		\right.
	\end{split},
\end{align}
where for a linked pair of nodes, the expected prediction of being linked is $E_{\overline{C},L}$, and that of being unlinked is $1-E_{\overline{C},L}$. For an unlinked pair of nodes, the expected prediction of being unlinked is $E_{\overline{C},U}$, and that of being linked is $1-E_{\overline{C},U}$.

{
\begin{definition}
Given a community detection algorithm $\zeta _{C}$ that can decide whether a pair of nodes are from the same community or not, we define its error rate as $\varepsilon _{0}$ for a pair of disconnected nodes and that as $\varepsilon _{1}$ for pairs of connected nodes. 
\end{definition}
}
\begin{myTheo}
{
	According to Definitions 1, 2, and 3, given an adjacency matrix $A$ that is generated by the above stochastic blockmodel, if the following condition is satisfied
	\begin{align}
	&(p-q)(\frac{p}{p+(K-1)q}+\frac{{1-p}}{K - [p+(K-1)q]})\nonumber\\
&+1-\varepsilon _{0}-\varepsilon _{1} \nonumber\\
	\ge &\frac{1-W_{\overline{c}}}{W_c} (E_{\overline{C},U}+E_{\overline{C},L})\label{eq7}.
\end{align}}
{ 
The probability of making correct predictions using hybrid node fusion
	$W_{\overline{c}}f(\zeta _{\overline{C}})+W_cf(\zeta _{C})$  
	is expected to be higher than that of using non-community-based labeling $f(\zeta _{\overline{C}})$ only. 
	\label{T0}}
\end{myTheo}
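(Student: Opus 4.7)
The plan is to compare the expected probability of correct prediction for the hybrid predictor $W_{\overline{c}}f(\zeta_{\overline{C}})+W_c f(\zeta_C)$ against that of the non-community-only predictor $f(\zeta_{\overline{C}})$, and to show that their difference is non-negative exactly when inequality (\ref{eq7}) holds. I would decompose each expectation over the joint distribution of the true link status $A_{xy}$ and the community membership of the pair $(v_x,v_y)$ under the stochastic blockmodel.

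First I would apply Bayes' rule to the SBM generative process to compute the posterior probabilities that the pair lies in the same community, conditioned on its link status. With prior $1/K$ for being in the same community and $(K-1)/K$ for different communities, the within/across link probabilities $p$ and $q$ yield
\begin{align*}
P(\text{same}\mid A_{xy}=1)=\frac{p}{p+(K-1)q},\qquad P(\text{same}\mid A_{xy}=0)=\frac{1-p}{K-[p+(K-1)q]}.
\end{align*}
These are precisely the two fractions appearing in (\ref{eq7}); the leading factor $(p-q)$ then captures how much more likely a within-community directed link is than a cross-community one, i.e., how informative community structure is for the prediction task.

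Next, using Definition 3, I would express the expected contribution of the $W_c f(\zeta_C)$ branch of the hybrid, conditional on $A_{xy}$. Since the community detector mis-classifies a connected pair's community status at rate $\varepsilon_1$ and a disconnected pair's at rate $\varepsilon_0$, the net accuracy beyond random guessing aggregates as $(1-\varepsilon_0)+(1-\varepsilon_1)-1=1-\varepsilon_0-\varepsilon_1$, which matches the second summand of the left-hand side of (\ref{eq7}). Combining this accuracy gain with the SBM posteriors from the previous step yields the expected improvement delivered by the community branch when it pushes the hybrid score in the correct sign direction for each of the linked and unlinked cases.

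Finally, using the piecewise definition of $\mathbb{E}(f(\zeta_{\overline{C}}))$ supplied just before the theorem, I would subtract the non-community-only expected correctness from the hybrid expected correctness. By linearity the difference splits into (i) a positive gain from the community branch, equal to the left-hand side of (\ref{eq7}) up to a factor $W_c$, and (ii) an attenuation of the non-community branch whose magnitude is bounded by $(1-W_{\overline{c}})(E_{\overline{C},U}+E_{\overline{C},L})$, since this branch now carries weight $W_{\overline{c}}$ instead of $1$. Rearranging and dividing by $W_c$ produces exactly (\ref{eq7}), so the condition is sufficient for the hybrid expected accuracy to dominate. The main obstacle I anticipate is the bookkeeping across the four piecewise cases for $\mathbb{E}(f(\zeta_{\overline{C}}))$: one must track signs carefully so that the two SBM posteriors pair with the correct link-status cases, and so that the sign of $(p-q)$ aligns with the direction in which the community branch corrects the non-community branch's errors.
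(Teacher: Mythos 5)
Your plan follows essentially the same route as the paper's proof: Bayes posteriors $\theta_L=\frac{p}{p+(K-1)q}$ and $\theta_U=\frac{1-p}{K-[p+(K-1)q]}$ for same-community membership conditioned on link status, a piecewise expected correctness for the community branch with the error rates $\varepsilon_0,\varepsilon_1$ subtracted, and a linearity-of-expectation comparison in which the community gain $W_c[\cdot]$ must offset the attenuation $(1-W_{\overline{c}})(E_{\overline{C},U}+E_{\overline{C},L})$ of the non-community branch. One caution for when you carry out the algebra: summing the two correct-prediction cases actually yields $(p-q)(\theta_L-\theta_U)+1-\varepsilon_0-\varepsilon_1$ rather than a term in $\theta_L+\theta_U$ as written in Eq.~(\ref{eq7}) --- a sign discrepancy that is already present in the paper's own Eq.~(\ref{eq15}).
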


\begin{proof}

Typically, benchmark data sets for link prediction methods contain linked and unlinked pairs of nodes with the same proportion. By random selection, the probability that a pair of linked nodes in the benchmark set is from the same community is 
\begin{align}
	\theta_L=&\frac{(\frac{1}{K})^2Kp}{(\frac{1}{K})^2Kp+[1-(\frac{1}{K})^2K]q}\\
	=&\frac{p}{p+(K-1)q},
\end{align}
and similarly, that 
from two different communities is $1-\theta_L$.

Likewise, the probability that an unlinked pair of nodes are from the same community as $\theta_U$
\begin{align}
	\theta_U=&\frac{(\frac{1}{K})^2K(1-p)}{(\frac{1}{K})^2K(1-p)+[1-(\frac{1}{K})^2K](1-q)}\\
	=&\frac{{1-p}}{K - [p+(K-1)q]},
\end{align}
and that from different communities is $1-\theta_U$.

a's

As $\zeta _{C}$ can identify the block structure correctly, the predicted link probability can be approximated by the generative parameters $p$ and $q$: 
{
\begin{align}
\begin{split}
\mathbb{E}(\zeta _{C})
= \left \{
\begin{array}{ll}
  \theta_U (1-p)+(1-\theta_U) (1-q)-\varepsilon _{0} ,   \\
 \ \ \ \ \ \ \ \ \ \  \text{if $\ f(\zeta _{C})<0$ and $A_{x,y}=0\ $}, \\
  1-[\theta_U (1-p)+(1-\theta_U) (1-q)]+\varepsilon _{0}, \\ \ \ \ \ \ \ \ \ \ \    \text{if $\ f(\zeta _{C})>0$ and $A_{x,y}=0\ $}, \\
    1-[ \theta_L p+(1-\theta_L)q]+\varepsilon _{1},\\
 \ \ \ \ \ \ \ \ \ \   \text{if $\ f(\zeta _{C})<0$ and $A_{x,y}=1\ $}, \\
    \theta_L p+(1-\theta_L)q-\varepsilon _{1},  \\
 \ \ \ \ \ \ \ \ \ \  \text{if $\ f(\zeta _{C})>0$ and $A_{x,y}=1\ $}. \\
\end{array}
\right.
\end{split}\label{EC}
\end{align}
}

where given node $x$ and $y$ are connected, $\theta_L p$ denotes the probability that two nodes are in the same community and the link is correctly predicted, and $(1-\theta_L)q$ denotes the probability that two nodes are in different communities with correct prediction. Similarly, given node $x$ and $y$ are disconnected, $\theta_U (1-p)$ denotes the probability that two nodes are in the same community and the disconnection is correctly predicted, and $(1-\theta_U)(1-q)$ denotes the probability that two nodes are in different communities with correct prediction.

Consequently, for the hybrid node fusion, the expected probability is 
{
\begin{align}
&\mathbb{E}(W_{\overline{c}}f(\zeta _{\overline{C}})+W_cf(\zeta _{C}))=\nonumber\\
&\left \{
\begin{array}{ll}
    W_{\overline{c}}E_{\overline{C},U}+  W_c[\theta_U (1-p)+(1-\theta_U) (1-q)-\varepsilon _{0}],\\ \ \ \   \text{if}\ W_{\overline{c}}f(\zeta _{\overline{C}}) +W_cf(\zeta _{C})<0\ \text{and} \ A_{x,y}=0,\\           
 W_{\overline{c}}(1-E_{\overline{C},U})+W_c\{ 1-[\theta_U (1-p)\\
 \ \ \ \ \  \ \ \ \ \  +(1-\theta_U)(1-q)-\varepsilon _{0}]\}, \\ \ \ \   \text{if}\ W_{\overline{c}}f(\zeta _{\overline{C}})+W_cf(\zeta _{C})>0 \text{ and } \ A_{x,y}=0,  \\ 
      W_{\overline{c}}(1-E_{\overline{C},L})+ W_c\{  1-[\theta_L  p +(1-\theta_L)q-\varepsilon _{1}]\},   \\  \ \ \     \text{if}\ W_{\overline{c}}f(\zeta _{\overline{C}})+W_cf(\zeta _{C})<0 \text{ and } A_{x,y}=1 ,\\ 
    W_{\overline{c}}E_{\overline{C},L}+ W_c[\theta_L  p +(1-\theta_L)q-\varepsilon _{1}],\\
\ \ \   \text{if } W_{\overline{c}}f(\zeta _{\overline{C}})+W_cf(\zeta _{C})>0 \text{ and } A_{x,y}=1.
\end{array}
\right.\label{ENC}
\end{align}
	}

{The expectation of the sum of random variables is equal to the sum of their respective expectations. Thus, the probability of predicting the link correctly with hybrid fusion is as follows:}

{
\begin{align}
&W_{\overline{c}}E_{\overline{C},U}+ W_c[\theta_U (1-p) +(1-\theta_U) (1-q)-\varepsilon _{0}]\nonumber\\
 & \ \  +W_{\overline{c}}E_{\overline{C},L}+ W_c[\theta_L  p +(1-\theta_L)q-\varepsilon _{1}] \nonumber\\
=&W_{\overline{c}}(E_{\overline{C},U}+E_{\overline{C},L})+W_c[\theta_L(p-q)+\theta_U(q-p)\nonumber\\
&\ \  +1-\varepsilon _{0}-\varepsilon _{1}]\label{eq14}\\
=&W_{\overline{c}}(E_{\overline{C},U}+E_{\overline{C},L})+W_c[(p-q)(\frac{p}{p+(K-1)q}\nonumber\\
&\ \  +\frac{{1-p}}{K - [p+(K-1)q]})+1-\varepsilon _{0}-\varepsilon _{1}]\label{eq15}\\
\ge& W_{\overline{c}}(E_{\overline{C},U}+E_{\overline{C},L})+
{(1-W_{\overline{c}})(E_{\overline{C},U}+E_{\overline{C},L})}\\
\ge& E_{\overline{C},U}+E_{\overline{C},L},\label{eq17}
 \end{align}}
where Eq.($\ref{eq15}$) is obtained by substituting $\theta_U$ and $\theta_L$ with Eq.($\ref{eq14}$), and Eq.($\ref{eq17}$) is obtained given that the condition of Eq.($\ref{eq7}$) is satisfied.
\end{proof}

 { From this theorem, we can find that, when a community detection algorithm of $\zeta _{C}$ is of relatively high accuracy, namely, with a small error rate $\varepsilon _{0}$ and $\varepsilon _{1}$, the condition of Eq. (\ref{eq7}) is likely to be satisfied, and vice versa.

To qualitatively analyze the impact of this theorem,
We define the left part of Eq.(\ref{eq7})
as $g(p,q, K)$, and for the ease of analysis, we assume that the error rates of the community detection algorithm are zero. Thus, when $p\in (0,1]$ and $q\in (0,1]$, the partial derivative of $g(p,q,K)$ with respect to $p$ is}
\begin{align}
&\frac{\partial g(p,q,K)}{\partial p}\nonumber\\
=&\dfrac{p}{p+\left(K-1\right)\,q}+\left(p-q\right)\,\nonumber\\
&\left(\dfrac{\left(K-1\right)\,q}{\left({p+\left(K-1\right)\,q}\right)^{2}}+\dfrac{\left(K-1\right)\,q-K+1}{\left({-p-\left(K-1\right)\,q+K}\right)^{2}}\right)\nonumber\\
&+\dfrac{1-p}{-p-\left(K-1\right)\,q+K}\\
\ge& 0,
\end{align}
and that with respect to $q$ is
\begin{align}
&\frac{\partial g(p,q,K)}{\partial q}\nonumber\\
=&-\dfrac{p}{\left(K-1\right)\,q+p}+\left(p-q\right)\,\nonumber\\
&\left(\dfrac{\left(1-K\right)\,\left(p-1\right)}{\left({-\left(K-1\right)\,q-p+K}\right)^{2}}-\dfrac{\left(K-1\right)\,p}{\left({\left(K-1\right)\,q+p}\right)^{2}}\right)\nonumber\\
&-\dfrac{1-p}{-\left(K-1\right)\,q-p+K}\\
\le& 0.
\end{align}
Therefore, when $p\in (0,1]$ and $q\in (0,1]$, the left side of Eq. (\ref{eq7}) increases as $p$ increases and $q$ decreases. 
Consequently, the condition of Eq. (\ref{eq7}) in Theorem \ref{T0} is easier to be satisfied with larger $p$ and smaller $q$, namely, a more obvious community structure. This is in consistency with the intuition that a clearer community structure can be more helpful for link predictions. 

\subsubsection{Node-Link Transformation based on Directed Graph}

For undirected graphs, transforming a subgraph including target nodes into a line graph can improve the efficiency of GNN in learning link features \cite{cai2021line}. 

For directed graphs, in this work, we propose a corresponding transformation.
In this transformation, each link in an input graph has been transformed into a node 
and a link prediction problem is transformed into a node prediction one.

We use the following function to represent transformation from $G$ to 
$\mathscr{G}$ :
\begin{equation}
	TR(G)=\mathscr{G} =\{\mathscr{V}, \mathscr{E}\}, \label{TR}
\end{equation}
where ${TR(\cdot)}$ denotes the transformation function, $\mathscr{G}$ denotes the directed line graph corresponding to $G$. ${\mathscr{V}}$ denotes the nodes in the directed line graph, and ${\mathscr{E}}$ denotes the links in the directed line graph. 
The relationship between these variables can be as follows:
\begin{align}
	\mathscr{V} = \{(v_i, v_j)&\ |\ (v_i,v_j)\in E\}, \label{TRv}\\
	\mathscr{E} = \{[(v_i,v_j)&,(v_j,v_k)]\ |(v_i,v_j),(v_j,v_k)\in E\}. \label{TRe}
\end{align}
It means that, when $(v_i,v_j)$ is an link in the input graph, it becomes a corresponding node in the line graph. When there is a directed link from $v_i$ to $v_j$ and from $v_j$ to $v_k$ in the input graph, there is a corresponding link from node $(v_i,v_j)$ to node $(v_j,v_k)$ in the line graph. 
An example of transforming a directed graph 
into a directed line graph is shown in Fig. \ref{line graph}. 
The feature of each node, for example, $(v_i,v_j)$ in the line graph, is obtained by concatenating corresponding node features, $[O_H[i], O_H[j]]$, in the input graph.
{If a subgraph has $\hat{n} $ nodes with average degree $\hat{d} $, the corresponding number of nodes in the transformed directed line graph is 
$\hat{n} \hat{d} $. Thus, for each graph convolution operation, the transformed input graph is $\hat{d}$ times larger than the original input graph.

}
\begin{figure}[!t]
	\centering
	\includegraphics[height=1in]{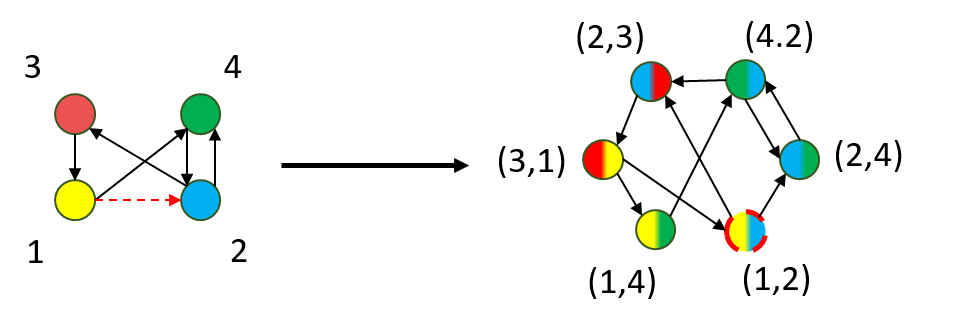}
	\DeclareGraphicsExtensions.
	\caption{Illustration of a directed graph transformed into a directed line graph.
Each node in a directed line graph corresponds to a unique directed link in an input graph. The colors on the left and right of the nodes in the right graph represent features representing the source node and destination node of the corresponding link in the left graph. Predicting the node $(1, 2)$ in the right graph is equivalently predicting the link from node one to node two in the left graph.}
	\label{line graph}
\end{figure}

\subsubsection{Link Prediction}
{In this part, we employ GNN in the directed line graph to learn features of nodes, which is equivalent to learning features of links in an original directed graph. Thus, the prediction of directed links is transformed into the prediction of nodes.

The GNN operation is performed on the directed line graph with the following equation:
\begin{align}
	H_{l+1}=GNN(H_l,\mathscr{G}),\label{LGCN}
\end{align}
and a fully connected network is applied on the final output $H_L$, to form a classifier
\begin{align}
	B=&FCN(H_{L}),
\label{bfcnhl}
\end{align}
where ${B}$ is a vector with each element between $0$ and $1$ representing the existence likelihood of the corresponding link in the original graph, $FCN$ denotes a fully connected output network, and its last layer is a logistic layer.

Therefore, the loss function is as follows}
{
\begin{align}
	\mathcal{L}_{CE}=&\frac{1}{N} \sum_{\mathcal{v}_k\in \mathscr{V}}-[R_k\cdot \log(B_k)+(1-R_k)\cdot \log(1-B_k)],\label{loss}
\end{align}
}
{where $\mathcal{L}_{CE}$ is cross-entropy loss, $R_k$ denotes the true label of node $\mathcal{v}_k$ in the transformed directed graph, corresponds to the true label of target links in the original graph.}

{
\subsection{Pseudocode}
The pseudo-code of the proposed approach for the inference phase is depicted in Algorithm \ref{Algorithm}. A path-based node labeling represented by Eq.(\ref{labelA}) is usually a predefined hash function.
A community-based node labeling algorithm represented by Eq.(\ref{labelC}) is an unsupervised approach.
A node embedding approach represented by Eq.(\ref{labelE}) is typically a self-supervised approach. With these features, a GNN represented by Eq.(\ref{LGCN}) can be trained given ground-truth link existence labels.
}

\IncMargin{1em}
\begin{algorithm} \SetKwData{Left}{left}\SetKwData{This}{this}\SetKwData{Up}{up} \SetKwFunction{Union}{Union}\SetKwFunction{FindCompress}{FindCompress} \SetKwInOut{Input}{input}\SetKwInOut{Output}{output}
	
	\Input{the directed graph ${G}$, \\
the target link from node ${v_x}$ to node ${v_y}$
} 
	\Output{Prediction result}
	 \BlankLine 
	 
	 \emph{Obtain the path-based node labels by using Eq.($\ref{labelA}$)}\; 
	\emph{Obtain the community-based node labels by using Eq.($\ref{labelC}$)}\; 
	\emph{Obtain the contrastive node embedding by using Eq.($\ref{labelE}$)}\; 	
	 \emph{Obtain hybrid node fusion by using Eq.($\ref{label}$)}\; 
	\emph{Transform the subgraph around the target link into a line graph 
by using Eq. ($\ref{TR}$) and generate node features of the line graph 
}\; 
	\emph{Apply GNNs according to Eq.(${\ref{LGCN}}$) to extract final features in the subgraph and predict link existences by Eq. (\ref{bfcnhl}) according to corresponding node features}\;
 	 	  \caption{Proposed Approach for Directed Link Prediction}
 	 	  \label{Algorithm} 
 	 \end{algorithm}
 \DecMargin{1em}

{

\begin{table*}[!t]

	\renewcommand{\arraystretch}{1}
	\caption{Data info}
	\label{dataset}
	\centering
	\begin{tabular}{c| c c c c c c}
		\toprule
		dataset &Cora\_ml & Cora & Citeseer& {Pubmed}& {Bitcoin}& {p2p-Gnutella04}\\
		\midrule
		Number of nodes &2993&2707&3308&{19717}&{5881}&{10876}\\
		\hline
		Number of links &8417&5430&4716&{44338}&{35592}&{39994}\\
		\hline
		Number of binary-directional links between nodes &516&302&234&{10}&{7050}&{0}\\
		\hline
		Proportion of binary-directional links &0.0610&0.0550&0.0490&{0.0002}&{0.1980}&{0}\\
		\bottomrule
	\end{tabular}

\end{table*}
}
\begin{table*}[!t]
	\renewcommand{\arraystretch}{1}
	\caption{AUC and AP results compared to baseline methods (60Tr).}
	\label{result60}
	\centering
	\begin{tabular}{l| c c c c c c c c c c c c}
		
		\toprule
		Model & \multicolumn{2}{|c|}{Cora\_ml} & \multicolumn{2}{|c|}{Cora} & \multicolumn{2}{|c|}{Citeseer}&\multicolumn{2}{|c|}{Pubmed}&\multicolumn{2}{|c}{Bitcoin}&\multicolumn{2}{|c}{p2p-Gnutella04}\\
		\midrule
		Metric&AUC&AP&AUC&AP&AUC&AP&AUC&AP&AUC&AP&AUC&AP\\
		\hline
		GAE&0.76&0.77&0.67&0.68&0.65&0.65&0.75&0.77&0.87&0.88&0.52&0.57\\
		\hline
		GVAE&0.80&0.84&0.71&0.77&0.68&0.75&\underline{0.78}&\underline{0.80}&0.87&\underline{0.90}&0.54&0.58\\
		\hline
		STGAE&0.75&0.76&0.76&0.79&0.60&0.62&0.70&0.77&0.90&\textbf{0.91}&{0.67}&{0.64}\\
		\hline
		STGVAE&0.80&0.84&0.72&0.77&0.67&0.74&0.72&0.77&\underline{0.91}&\underline{0.90}&\underline{0.73}&\underline{0.72}\\
		\hline
		GGAE&\underline{0.85}&\underline{0.88}&0.80&\underline{0.85}&0.71&\underline{0.78}&0.77&\underline{0.80}&0.86&0.89&{0.68}&{0.70}\\
		\hline
		GGVAE&\underline{0.85}&0.84&\underline{0.82}&0.80&\underline{0.76}&0.73&0.71&0.70&0.87&0.86&{0.67}&{0.65}\\
		\hline
		\textbf{FFD}&\textbf{0.88}&\textbf{0.89}&\textbf{0.85}&\textbf{0.86}&\textbf{0.81}&\textbf{0.83}&\textbf{0.83}&\textbf{0.82}&\textbf{0.92}&\underline{0.90}&\textbf{0.81}&\textbf{0.74}\\
		\bottomrule
		
	\end{tabular}
\end{table*}

\begin{table*}[!t]
	\renewcommand{\arraystretch}{1}
	\caption{AUC and AP results compared to baseline methods (50Tr).}
	\label{result50}
	\centering
	\begin{tabular}{l| c c c c c c c c c c c c}
		
		\toprule
		Model & \multicolumn{2}{|c|}{Cora\_ml} & \multicolumn{2}{|c|}{Cora} & \multicolumn{2}{|c|}{Citeseer}&\multicolumn{2}{|c|}{Pubmed}&\multicolumn{2}{|c}{Bitcoin}&\multicolumn{2}{|c}{p2p-Gnutella04}\\
		\midrule
		Metric&AUC&AP&AUC&AP&AUC&AP&AUC&AP&AUC&AP&AUC&AP\\
		\hline
		GAE&0.73&0.74&0.66&0.68&0.64&0.64&0.76&0.74&0.83&0.88&0.53&0.57\\
		\hline
		GVAE&0.79&\underline{0.84}&0.64&0.71&\underline{0.72}&\underline{0.76}&0.75&0.77&0.84&0.88&0.53&0.57\\
		\hline
		STGAE&0.71&0.72&0.61&0.62&0.63&0.67&\underline{0.77}&\underline{0.79}&\underline{0.89}&\textbf{0.91}&0.57&0.58\\
		\hline
		STGVAE&\underline{0.79}&0.83&0.66&0.72&0.64&0.71&0.73&0.76&\underline{0.89}&\underline{0.90}&\underline{0.68}&0.68\\
		\hline
		GGAE&0.77&0.81&0.76&\underline{0.82}&0.68&\underline{0.76}&\underline{0.77}&\textbf{0.80}&0.85&0.89&\underline{0.68}&\underline{0.69}\\
		\hline
		GGVAE&0.78&0.77&\underline{0.82}&0.79&0.71&0.67&0.68&0.66&0.85&0.83&0.66&0.64\\
		\hline
		\textbf{FFD}&\textbf{0.86}&\textbf{0.87}&\textbf{0.83}&\textbf{0.84}&\textbf{0.80}&\textbf{0.81}&\textbf{0.81}&\textbf{0.80}&\textbf{0.92}&\underline{0.90}&\textbf{0.80}&\textbf{0.74}\\
		\bottomrule
		
	\end{tabular}
\end{table*}

\begin{table*}[!t]
	\renewcommand{\arraystretch}{1}
	\caption{AUC and AP results compared to baseline methods (40Tr).}
	\label{result40}
	\centering
	\begin{tabular}{l| c c c c c c c c c c c c}
		
		\toprule
		Model & \multicolumn{2}{|c|}{Cora\_ml} & \multicolumn{2}{|c|}{Cora} & \multicolumn{2}{|c|}{Citeseer} & \multicolumn{2}{|c|}{Pubmed}& \multicolumn{2}{|c}{Bitcoin}&\multicolumn{2}{|c}{p2p-Gnutella04}\\
		\midrule
		Metric&AUC&AP&AUC&AP&AUC&AP&AUC&AP&AUC&AP&AUC&AP\\
		\hline
		GAE&\underline{0.74}&0.76&0.61&0.61&0.64&0.66&0.72&0.74&0.82&0.86&0.47&0.53\\
		\hline
		GVAE&0.72&0.77&0.64&0.70&0.59&0.65&0.70&0.72&0.81&0.86&0.52&0.56\\
		\hline
		STGAE&0.64&0.64&0.60&0.61&0.59&0.63&0.71&0.73&\underline{0.88}&\textbf{0.90}&0.47&0.48\\
		\hline
		STGVAE&0.72&0.78&0.61&0.64&0.61&0.67&0.68&0.72&0.86&\underline{0.89}&0.61&0.62\\
		\hline
		GGAE&\underline{0.74}&\underline{0.79}&0.71&\underline{0.77}&0.68&\underline{0.75}&\underline{0.73}&\underline{0.78}&0.85&0.88&\underline{0.68}&\underline{0.69}\\
		\hline
		GGVAE&0.73&0.69&\underline{0.78}&0.71&\underline{0.70}&0.68&0.63&0.62&0.81&0.75&0.64&0.61\\
		\hline
		\textbf{FFD}&\textbf{0.84}&\textbf{0.85}&\textbf{0.83}&\textbf{0.83}&\textbf{0.78}&\textbf{0.76}&\textbf{0.80}&\textbf{0.79}&\textbf{0.91}&\textbf{0.90}&\textbf{0.80}&\textbf{0.73}\\
		\bottomrule
		
	\end{tabular}
\end{table*}
\begin{table*}[!t]
	\renewcommand{\arraystretch}{1}
	\caption{AUC and AP results compared to baseline methods (30Tr).}
	\label{result30}
	\centering
	\begin{tabular}{l| c c c c c c c c c c c c}
		
		\toprule
		Model & \multicolumn{2}{|c|}{Cora\_ml} & \multicolumn{2}{|c|}{Cora} & \multicolumn{2}{|c|}{Citeseer} & \multicolumn{2}{|c|}{Pubmed} & \multicolumn{2}{|c}{Bitcoin}&\multicolumn{2}{|c}{p2p-Gnutella04}\\
		\midrule
		Metric&AUC&AP&AUC&AP&AUC&AP&AUC&AP&AUC&AP&AUC&AP\\
		\hline
		GAE&0.61&0.63&0.54&0.56&0.54&0.55&0.68&0.71&0.78&0.81&0.47&0.52\\
		\hline
		GVAE&0.64&0.71&0.55&0.61&0.56&0.61&0.68&0.70&0.76&0.83&0.51&0.54\\
		\hline
		STGAE&0.54&0.57&0.52&0.53&0.55&0.59&0.64&0.67&0.79&0.81&0.38&0.43\\
		\hline
		STGVAE&0.65&0.70&0.56&0.61&0.55&0.58&0.68&0.69&\underline{0.83}&\underline{0.87}&0.57&0.58\\
		\hline
		GGAE&0.70&\underline{0.76}&0.62&\underline{0.68}&0.58&\underline{0.64}&\underline{0.72}&\underline{0.76}&\underline{0.83}&0.86&\underline{0.67}&\underline{0.68}\\
		\hline
		GGVAE&\underline{0.71}&0.69&\underline{0.66}&0.60&\underline{0.60}&0.56&0.55&0.57&0.80&0.74&0.58&0.54\\
		\hline
		\textbf{FFD}&\textbf{0.82}&\textbf{0.83}&\textbf{0.81}&\textbf{0.81}&\textbf{0.74}&\textbf{0.70}&\textbf{0.78}&\textbf{0.76}&\textbf{0.90}&\textbf{0.89}&\textbf{0.79}&\textbf{0.73}\\
		\bottomrule
		
	\end{tabular}
\end{table*}

\begin{table*}[!t]
	\renewcommand{\arraystretch}{1}
	\caption{AUC results compared to traditional methods (50Tr).}
	\label{resulttraditional}
	\centering
	\begin{tabular}{l| c c c c c c }
		
		\toprule
		Model   &Cora\_ml&{Cora} & {Citeseer}  &pubmed& {Bitcoin}&p2p-Gnutella04\\
		\midrule
		CN&0.67 & 0.6&0.58 &0.57 &0.63&0.51  \\
		\hline
		Salton&0.67&0.6&0.58&0.57&0.62&0.51\\
		\hline
		Jaccard&0.65&0.57&0.47&0.52&0.61&0.49\\
		\hline
		Sorenson&0.65&0.57&0.47&0.52&0.60&0.49\\
		\hline
		HPI&0.42&0.31&0.18&0.21&0.41&0.31\\
		\hline
		HDI&0.65&0.57&0.47&0.52&0.61&0.49\\
		\hline
LHN-1&0.42&0.31&0.18&0.21&0.41&0.31\\
		\hline
PA&0.69&0.59&0.58&0.71&0.49&\underline{0.74}\\
		\hline
AA&0.67&0.60&0.58&0.57&0.63&0.51\\
		\hline
RA&0.67&0.60&0.58&0.57&0.63&0.51\\
		\hline
LP&0.78&0.67&0.64&0.76&0.49&0.66\\
		\hline
Katz&0.77&0.68&0.66&0.73&0.49&0.71\\
		\hline
ACT&0.62&0.54&0.47&0.72&0.44&0.66\\
		\hline
Cos&\underline{0.83}&\underline{0.72}&\underline{0.66}&\underline{0.79}&\underline{0.77}&0.49\\
		\hline
		\textbf{FFD}&\textbf{0.86}&\textbf{0.83}&\textbf{0.80}&\textbf{0.81}&\textbf{0.92}&\textbf{0.80}\\
		\bottomrule
		
	\end{tabular}
\end{table*}

\begin{table*}[!t]
	\renewcommand{\arraystretch}{1}
	\caption{{AUC and AP results compared to more methods (50Tr).}}
	\label{result50+}
	\centering
	\begin{tabular}{l| c c c c c c }
		
		\toprule
		Model  & \multicolumn{2}{|c|}{Cora} & \multicolumn{2}{|c|}{Citeseer} &  \multicolumn{2}{|c}{Bitcoin}\\
		\midrule
		Metric&AUC&AP&AUC&AP&AUC&AP\\
		\hline
		DiGAE&0.74&\underline{0.79}&0.78&\underline{0.81}&0.75&0.83\\
		\hline
  DiGAE\_single\_layer&0.71&0.78&\textbf{0.85}&\textbf{0.89}&0.75&\underline{0.84}\\
		\hline
  JPA&0.65&0.60&0.63&0.58&0.66&0.75\\
		\hline
  LINE&0.69&0.64&0.69&0.64&0.62&0.61\\
		\hline
  Node2vec&0.71&0.67&0.72&0.68&0.75&0.72\\
		\hline
  SEAL&0.71&0.77&0.72&0.74&0.75&0.78\\
		\hline
  ARVGA&0.70&0.77&0.68&0.74&0.78&0.75\\
		\hline
  GIC&0.62&0.66&0.66&0.65&0.80&0.77\\
		\hline
  LMA&0.69&0.76&0.67&0.73&0.82&0.81\\
		\hline
  MaskGAE&0.68&0.73&0.69&0.71&0.83&0.80\\
		\hline
  S2GAE-SAGE&\underline{0.79}&0.76&0.70&0.72&0.83&0.83\\
		\hline
  ClusterLP&0.73&0.78&0.73&0.76&\underline{0.85}&\underline{0.84}\\
		\hline
		
		\textbf{FFD}&\textbf{0.83}&\textbf{0.84}&\underline{0.80}&\underline{0.81}&\textbf{0.92}&\textbf{0.90}\\
		\bottomrule
		
	\end{tabular}
\end{table*}
\captionsetup{labelfont={color=black}}
\begin{table*}[!t]
	\renewcommand{\arraystretch}{1}
	\caption{AUC and AP results with different feature combinations (50Tr).}
	\label{combination}
	\centering
	\begin{tabular}{l| c c }
		
		\toprule
		\multicolumn{3}{c}{Cora}\\
		\midrule
		Metric&AUC&AP\\
		\hline
		Weight Averaging&0.82&0.83\\
		\hline
		Multiplication&0.58&0.61\\
		\hline
		Concatenate&\textbf{0.83}&\textbf{0.84}\\
		\bottomrule
		
	\end{tabular}
\end{table*}

\begin{table*}[!t]
	\renewcommand{\arraystretch}{1}
	\caption{AUC and AP of ablation study based on labels (50Tr).}
	\label{Ablation Experiment label 50}
	\centering
	\begin{tabular}{c c | c c c c c c c c c c c c}
		\toprule
		\multicolumn{2}{c|}{Model} & \multicolumn{2}{|c|}{Cora\_ml} & \multicolumn{2}{|c|}{Cora} & \multicolumn{2}{|c|}{Citeseer}& \multicolumn{2}{|c|}{Pubmed}& \multicolumn{2}{|c}{Bitcoin}&\multicolumn{2}{|c}{p2p-Gnutella04} \\
		\midrule
		Path-based&Community-based&AUC&AP&AUC&AP&AUC&AP&AUC&AP&AUC&AP&AUC&AP\\
		\hline
		\Checkmark&\XSolidBrush &0.80&0.83&0.72&0.73&0.74&0.76&0.68&0.70&0.88&0.88&0.78&0.72\\
		\hline
		\Checkmark&\Checkmark&\textbf{0.86}&\textbf{0.87}&\textbf{0.83}&\textbf{0.84}&\textbf{0.80}&\textbf{0.81}&\textbf{0.81}&\textbf{0.80}&\textbf{0.92}&\textbf{0.90}&\textbf{0.80}&\textbf{0.73}\\
		\bottomrule
	\end{tabular}
\end{table*}

\begin{table*}[!t]
	\renewcommand{\arraystretch}{1}
	\caption{AUC and AP of ablation study based on embedding and labels (50Tr).}
	\label{Ablation Experiment 50}
	\centering
	\begin{tabular}{c c | c c c c c c c c c c c c}
		\toprule
		\multicolumn{2}{c|}{Model} & \multicolumn{2}{|c|}{Cora\_ml} & \multicolumn{2}{|c|}{Cora} & \multicolumn{2}{|c|}{Citeseer}& \multicolumn{2}{|c|}{Pubmed}& \multicolumn{2}{|c}{Bitcoin}&\multicolumn{2}{|c}{p2p-Gnutella04}\\
		\midrule
		node embedding&node labeling&AUC&AP&AUC&AP&AUC&APP&AUC&AP&AUC&AP&AUC&AP\\
		\hline
		\XSolidBrush&\Checkmark &0.84&0.84&0.81&0.82&0.77&0.76&0.66&0.69&0.89&0.89&0.79&0.73\\
		\hline
		\Checkmark&\XSolidBrush&0.61&0.64&0.67&0.75&0.64&0.62&0.68&0.70&0.84&0.85&0.79&0.71\\
		\hline
		\Checkmark&\Checkmark&\textbf{0.86}&\textbf{0.87}&\textbf{0.83}&\textbf{0.84}&\textbf{0.80}&\textbf{0.81}&\textbf{0.81}&\textbf{0.80}&\textbf{0.92}&\textbf{0.90}&\textbf{0.80}&\textbf{0.73}\\
		\bottomrule
	\end{tabular}
\end{table*}

\section{Experiments}
Our proposed model performs the directed link prediction on six datasets. Area Under the Curve (AUC) and Average Precision (AP) are used as evaluation metrics and measure the performance with different models. {The \textbf{bolded} mark with the best experimental results and the \underline{underlined} mark with the second best results.}

{ AUC is widely used to evaluate model efficacy in link prediction tasks \cite{shang2022link,singh2022flp}. The AUC score can be interpreted and computed as the probability that a randomly chosen existing link is ranked higher by the prediction model than a randomly chosen non-existing link \cite{butun2019pattern}. The AUC score provides a single number summary of the performance of a prediction model across all possible classification thresholds, ranging from $0$ to $1$, where $1$ indicates perfect performance and $0.5$ indicates no better than random chance.

AP is another widely used metric in link prediction \cite{li2022collaborative,cai2021line}. It is particularly useful when the dataset is imbalanced, meaning there are much fewer existing links than non-existing links.
It is computed by taking the average of precision values, each of which is obtained when a true positive link is predicted, over the entire list of predicted links \cite{kumar2020link}. The resulting AP value will be between 0 and 1, with 1 indicating the perfect ranking of true positive links and 0 indicating the worst possible ranking.
}

We performed ablation studies to prove the rationality of each component. Our experiments are based on I7-11700K, RTX 3080 Ti, and 16 GB memory.
\subsection{Datatsets and Baseline Models}
{{The datasets include Cora\_ml \cite{bojchevski2017deep}, Cora \cite{shchur2018pitfalls}, Citeseer \cite{sen2008collective}, Pubmed\cite{yang2021inductive}, Bitcoin\cite{yi2022link} and p2p-Gnutella04 \cite{ren2022breadth}. The number of nodes, the number of links, the number of binary-directional links between nodes, and the proportion of binary-directional links are shown in Table \ref{dataset}. In this work, we compare with six other methods for link prediction in directed graphs, including Standard Graph AE (GAE) and Standard Graph VAE (GVAE) \cite{kipf2016variational}, Source/Target Graph AE (STGAE), Source/Target Graph VAE (STGVAE), Gravity Graph AE (GGAE) and Gravity Graph VAE (GGVAE) \cite{salha2019gravity} are chosen as baseline methods. We also compare with traditional methods CN \cite{newman2001clustering}, Salton \cite{chowdhury2010introduction}, Jaccard \cite{samad2017evaluation}, Sorenson \cite{sorensen1948method}, HPI \cite{ravasz2002hierarchical}, HDI \cite{zhou2009predicting}, LHN-1 \cite{leicht2006vertex}, PA \cite{barabasi1999emergence}, AA \cite{adamic2003friends}, RA \cite{zhou2009predicting}, LP \cite{aziz2020link}, katz \cite{katz1953new}, ACT \cite{Klein1993ResistanceD}, and Cos \cite{fouss2007random}.}} {To further analyze the performance of our model, we also compared our model's performance with DiGAE\cite {kollias2022directed}, DiGAE\_single\_layer\cite {kollias2022directed}, JPA\cite{zhang2024clusterlp}, LINE\cite{tang2015line}, Node2vec\cite{grover2016node2vec}, SEAL\cite{zhang2018link}, ARVGA\cite{pan2018adversarially}, GIC\cite{mavromatis2021graph}, LMA\cite{salha2022modularity}, MaskGAE\cite{li2023s}, S2GAE-SAGE\cite{tan2023s2gae} and ClusterLP\cite{zhang2024clusterlp}.}

\subsection{Experimental Setup}
{To demonstrate the performance of our model, we randomly chose a portion of all the connected links as the positive training samples and the rest as the positive test samples. The numbers of negative samples in the training set and the test set are the same as the number of positive samples in the corresponding sets. {We define 50Tr, 60Tr, 40Tr, and 30Tr as scenarios where 50\%, 60\%, 40\%, and 30\% of the existing links are randomly selected as positive training samples, respectively, with the remaining links used as positive test samples.}
}

{Many existing algorithms can be used for feature fusion. Without loss of generality, we use DRNL \cite{zhang2018link} as the node labeling function $\zeta _{P}$, a modularity maximization method \cite{PhysRevE.70.066111} as the community detection function $\zeta _{C}$, and DiGCL \cite{tong2021directed} as the node embedding function $\zeta _{E}$. In the experiments, we use their default settings as other approaches \cite{kipf2016variational,salha2019gravity,tong2021directed,PhysRevE.70.066111}. For DiGCL, in which default parameters have not been mentioned, we use parameters as follows: learning rate 0.001, dropout rate 0.3, weight decay 0.00001. For the GNN in FFD,
we use learning rate 0.005 and the batch size 50.
}
\subsection{Results and Analysis}

\begin{figure*}[tbp]
	\centering
	\captionsetup[subfloat]{labelsep=none,format=plain,labelformat=empty}
	\subfloat[Cora\_ml AUC]{\includegraphics[width=2.6in,height=2in]{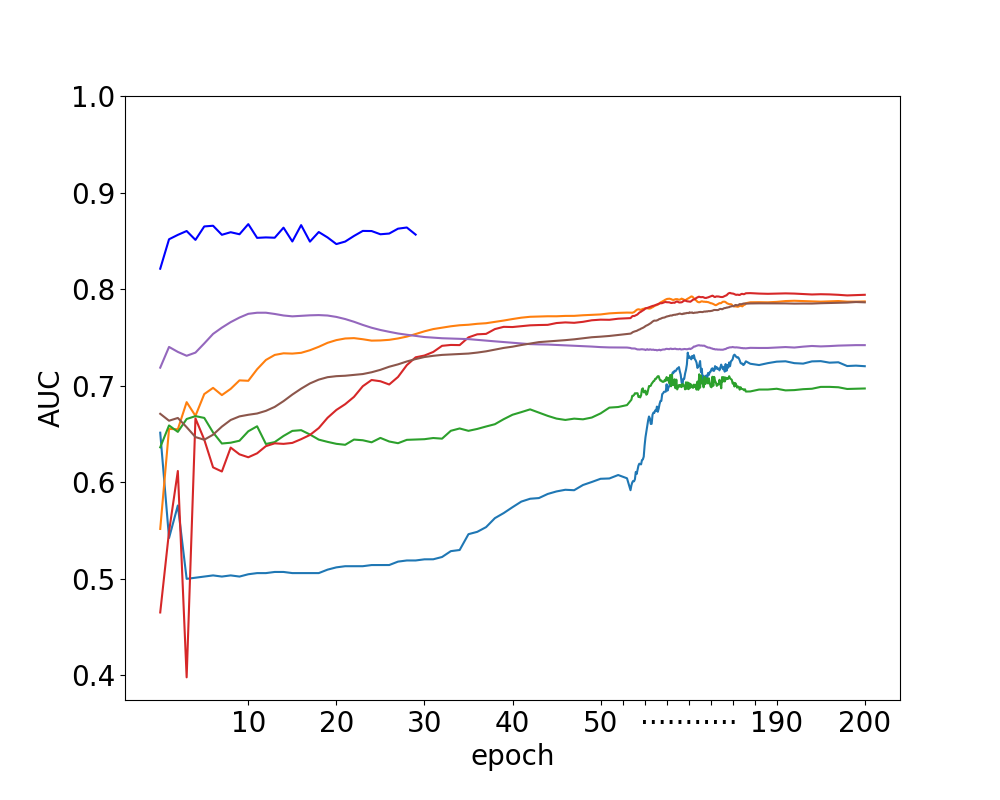}}\quad
	\subfloat[Cora AUC]{\includegraphics[width=2.6in,height=2in]{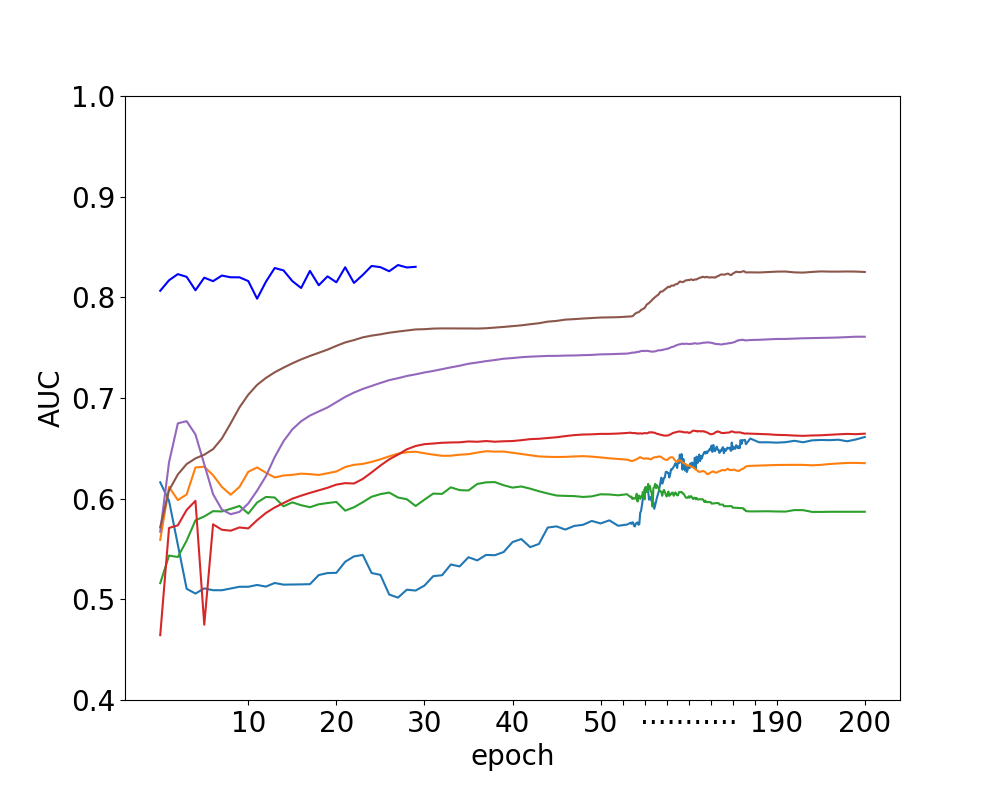}}\quad\\
	\subfloat[Citeseer AUC]{\includegraphics[width=2.6in,height=2in]{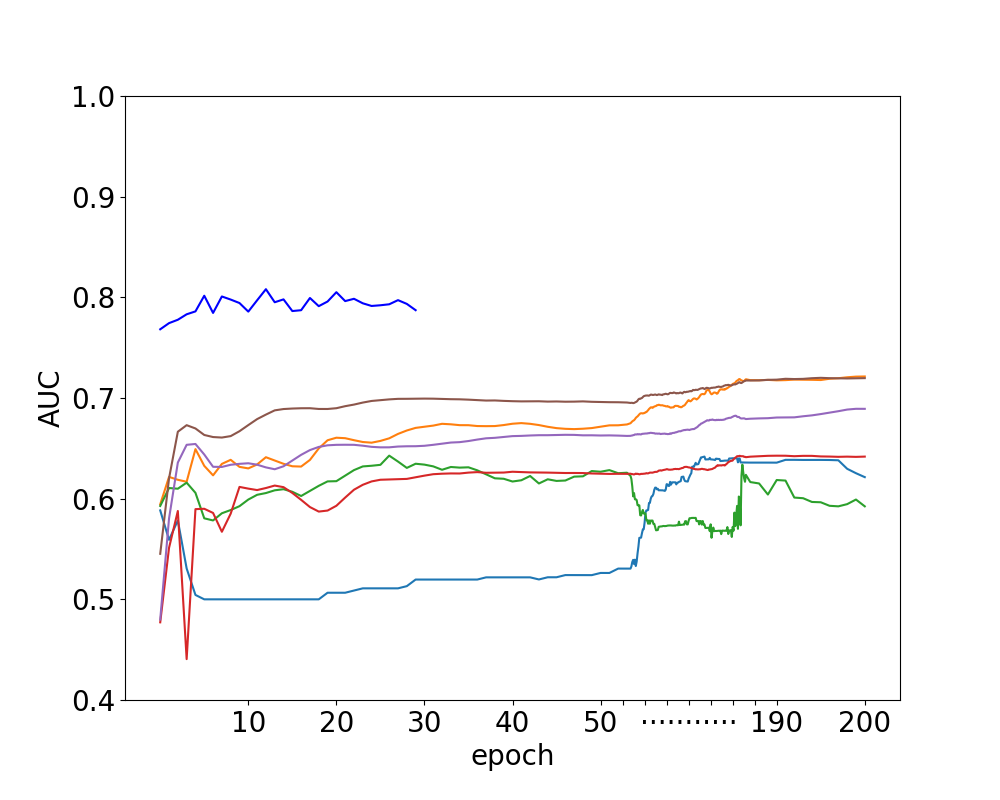}}\quad
	\subfloat[Pubmed AUC]{\includegraphics[width=2.6in,height=2in]{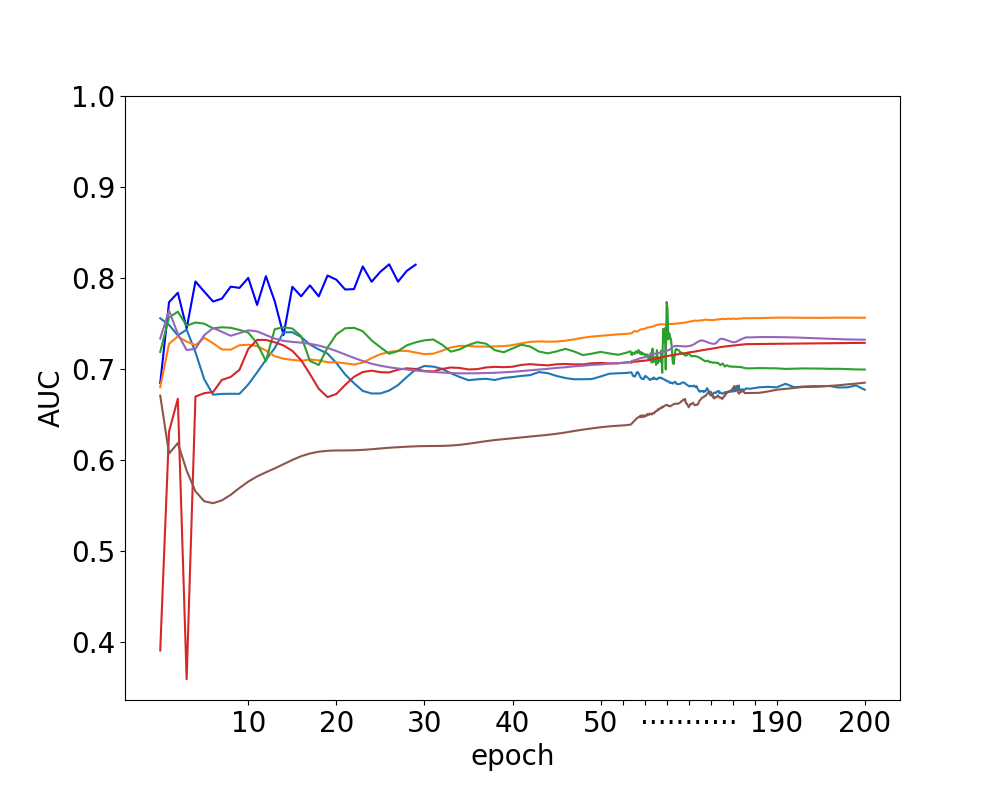}}\quad\\

	\subfloat[Cora\_ml AP]{\includegraphics[width=2.6in,height=2in]{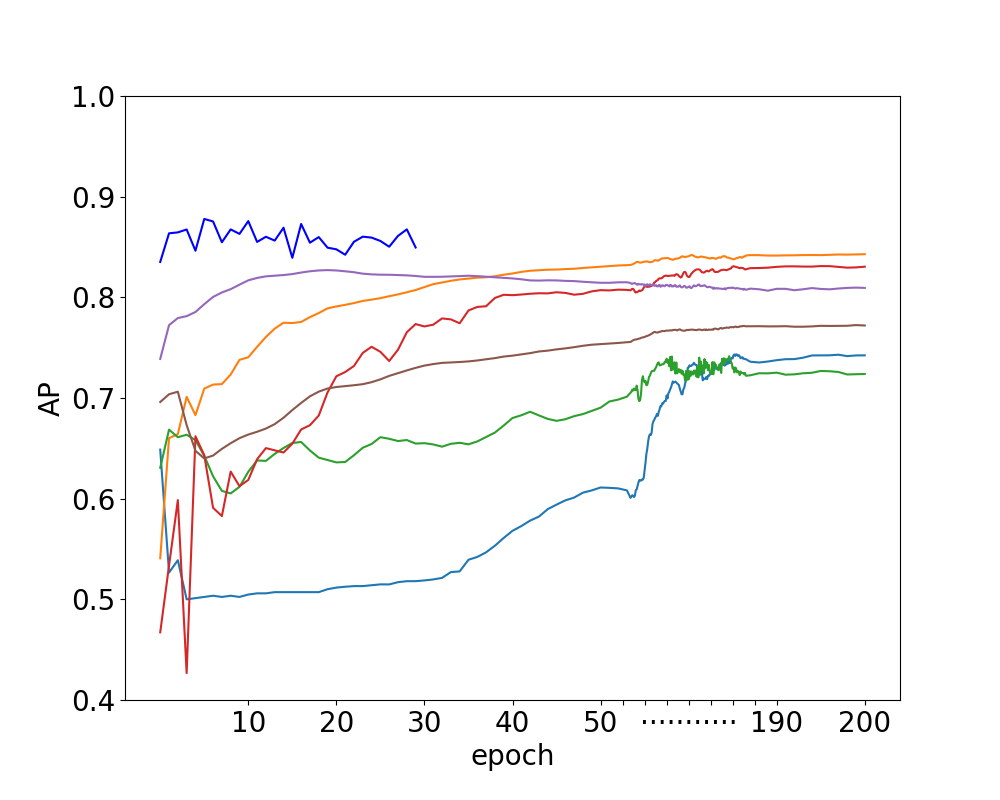}}\quad
	\subfloat[Cora AP]{\includegraphics[width=2.6in,height=2in]{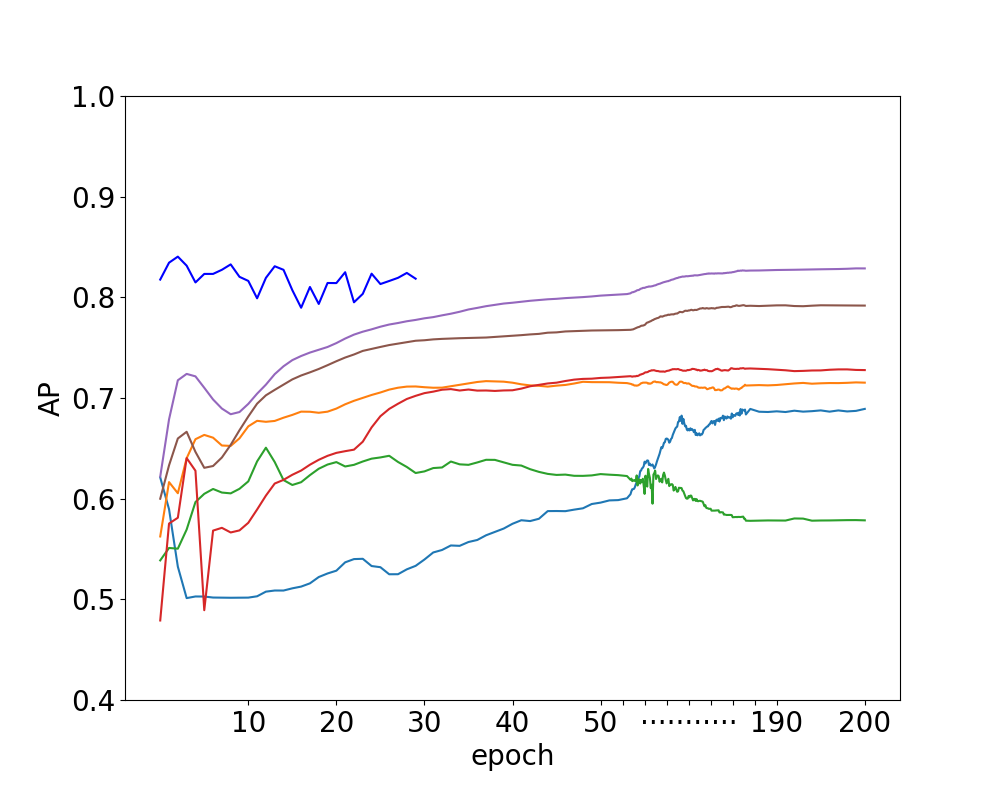}}\quad\\
	\subfloat[Citeseer AP]{\includegraphics[width=2.6in,height=2in]{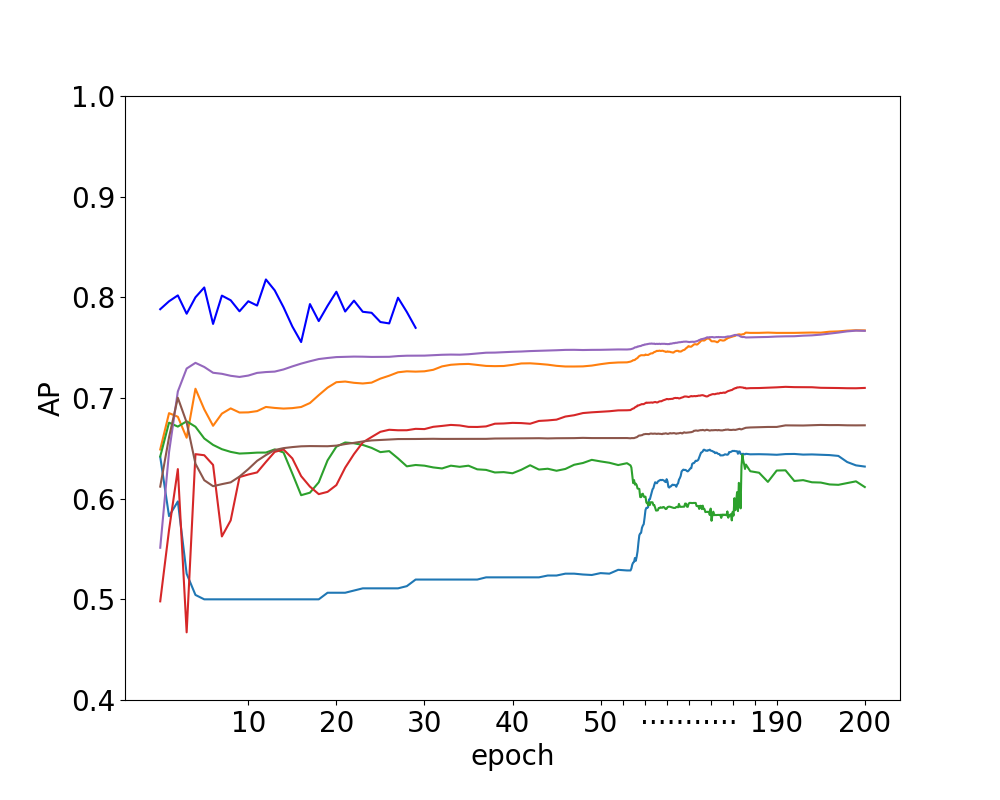}}\quad
	\subfloat[Pubmed AP]{\includegraphics[width=2.6in,height=2in]{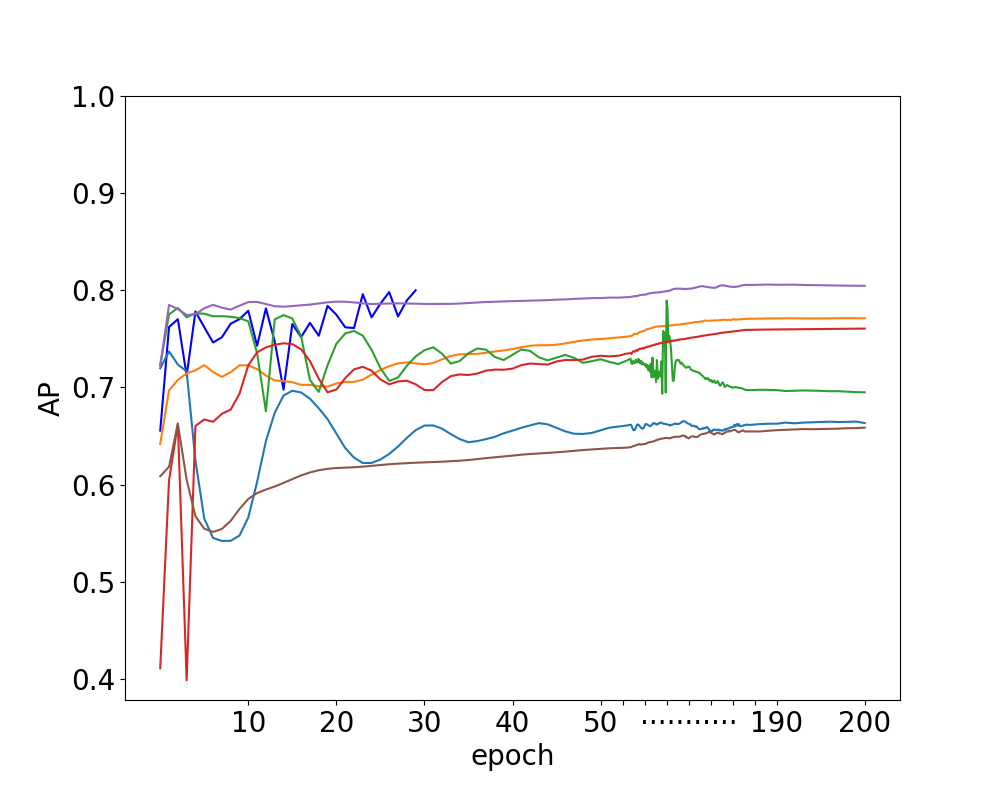}}\quad\\	
	\subfloat[FFD]{\includegraphics[width=0.5in]{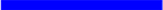}}\quad
	\subfloat[GAE]{\includegraphics[width=0.5in]{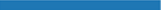}}\quad
	\subfloat[GVAE]{\includegraphics[width=0.5in]{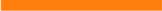}}\quad
	\subfloat[STGAE]{\includegraphics[width=0.5in]{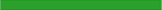}}\quad
	\subfloat[STGVAE]{\includegraphics[width=0.5in]{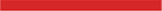}}\quad
	\subfloat[GGAE]{\includegraphics[width=0.5in]{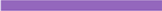}}\quad
	\subfloat[GGVAE]{\includegraphics[width=0.5in]{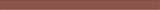}}\quad
	\caption{AUC and AP results compared to baseline methods (50Tr).}
	\label{AUC AND AP50}
\end{figure*}

\captionsetup{labelfont={color=black}}
\begin{figure*}[tbp]
	\centering
	\captionsetup[subfloat]{labelsep=none,format=plain,labelformat=empty}	
	\subfloat[K=5]{\includegraphics[width=3in]{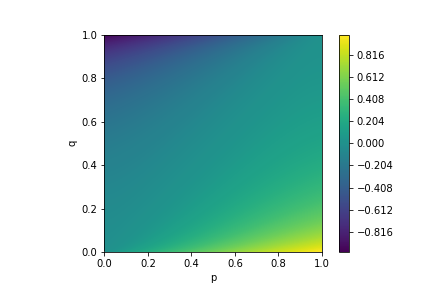}}\quad
	\subfloat[K=10]{\includegraphics[width=3in]{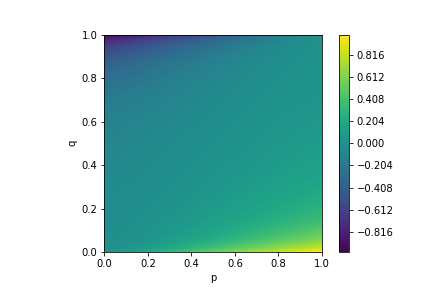}}\quad
	\subfloat[K=15]{\includegraphics[width=3in]{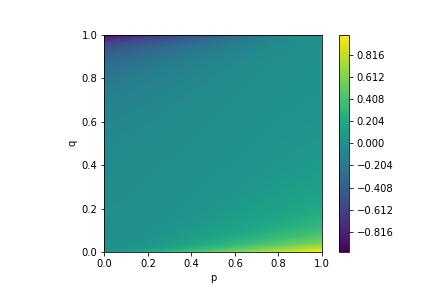}}\quad
	\subfloat[K=20]{\includegraphics[width=3in]{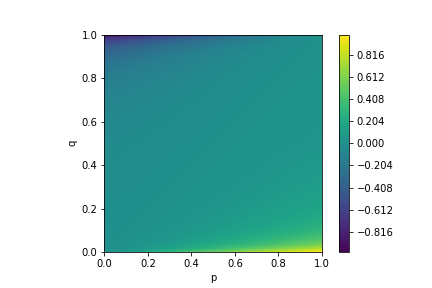}}\quad
	\caption{The values of $g(p,q, K)-1$ corresponding to different values of $p$ and $q$ when $K$ is fixed.}
	\label{pq}
\end{figure*}

\subsubsection{Directed graph Link Prediction}
{We conducted our model and six baseline models on six datasets.
In Table \ref{result60} and Table \ref{result50}, it can be seen that our model has achieved the best results in terms of AUC and AP in most cases when 60\% and 50\% of the randomly selected connected links are used in the train data respectively. Compared to the best scores achieved by other approaches,
our method demonstrates an average gain of 4.2\% in AUC and 1.2\% in AP when 60\% of the connected links were used as positive training samples. Similarly, our method exhibited an average gain of 3.6\% in AUC and 2.5\% in AP when 50\% of the connected links were used as positive training samples.

In addition, our proposed model also works well in cases with fewer training samples. In Table \ref{result40} and Table \ref{result30}, we demonstrate that, with randomly selected 40\% and 30\% of the connected links as the positive training samples, our proposed model outperforms most of the other models. Compared to the highest scores achieved by baseline approaches, our method achieves an average gain in AUC of 9.1\% and an average gain in AP of 3.6\% with 40\% of the connected links as positive training samples. Similarly, with 30\% of the connected links as positive training samples, our method achieves an average gain of 15\% in AUC and an average gain of 7.1\% in AP. Therefore,
fusing community information with node embedding information is more likely to improve the directed link prediction ability of deep learning models when the training set is smaller. {Compared to traditional methods, graph neural network-based approaches can automatically learn the complex structures and attributes of graphs, handle nonlinear relationships, capture global information, process various types of graphs, and maintain high prediction accuracy. Table \ref{resulttraditional} shows the results of the AUC comparison between our method and the traditional method, which proves that our method is superior to the traditional method. {To further analyze the performance of our model, we conducted more comparative experiments on three datasets: Cora, Citeseer and Bitcoin, against several popular algorithms. As shown in Table \ref{result50+}, the results demonstrate that our method achieved better results on the Cora and Bitcoin datasets. On the Citeseer dataset, it performed second only to the DiGAE\_single\_layer algorithm.} We also conducted experiments with different feature combinations in Eq. (\ref{label}), using weighted averaging, multiplication, and concatenation  between features. Table \ref{combination} demonstrates that relatively better results can be achieved using the concatenate method.

The Cora\_ml, Cora, Citeseer, and Pubmed datasets are classic academic citation networks with strong community structures. Such characteristics can better leverage the model capability as local and global features can complement each other. On the other hand, the Bitcoin dataset is a dynamic transaction network, and the p2p-Gnutella04 dataset is a large-scale peer-to-peer network. They both exhibit relatively weak community patterns. Consequently, the improvements on these datasets are a bit limited. }

In ablation studies, to validate the effectiveness of fusing community information, we compare the performance when 50\% of the connected links are employed as positive training samples. The results, detailed in Table \ref{Ablation Experiment label 50}, clearly demonstrate the substantial performance enhancement achieved by incorporating community information. Namely, the approach leveraging community information significantly outperforms its counterpart without community information, with an average gain of 9\% in AUC and 7\% in AP. This phenomenon underscores the crucial role of community information in improving predictive accuracy for deep learning-based directed link prediction methods.

We further explored the effectiveness of node labeling and node embedding, respectively, where node labeling includes path-based and community-based node labeling. Experiments were conducted with 50\% of connected links as positive training samples. The results of experiments with 50\% of the connected links as positive training samples are shown in Table \ref{Ablation Experiment 50}. We can find that, FFD with all modules achieved the best performance. The average AUC gain and the average AP gain with node labeling are 19\%, and are 15.8\%, respectively. The average AUC gain and the average AP gain with node embedding are 5.3\% and 4.6\%, respectively.

}
{Experiments also demonstrate that our approach converges with fewer epochs than other approaches. The experiments are carried out on four data sets with 50\% of the connected links as the positive training samples, and the AUC and the AP on the testing set at the end of each epoch are recorded and shown in Fig. \ref{AUC AND AP50}.

We also quantitively analyze $g(p,q, K)$, the left part of Eq. (\ref{eq7}) in Theorem \ref{T0}, experimentally. Without loss of generality, we assume the error rate to be zero, and plot $g(p,q, K)$+1 over different values of $p$ and $q$. From Fig. \ref{pq}, we can see that $g(p,q, K)$ increases with the increase of $p$ and the decrease of $q$, indicating that a clearer community structure makes a community detection algorithm more useful in directed link prediction, which is in accordance with our analysis.

\section{Conclusion}
Graph neural network has achieved excellent results in graph learning problems. In directed link prediction, graph neural networks usually learn the neighbors around target links or learn the global features in a directed graph. In this work, we use a feature fusion method for directed link prediction that is able to incorporate community information and line graph information with node embedding methods, namely, fusing local and global features. {The hybrid node fusion of FFD enhances the robustness of the model. At the same time, the node-link transformation empowers graph neural networks to learn and predict features of directed links directly. Experimental results demonstrated its effectiveness, and such an approach is feasible to be extended for other kinds of graphs.

In the future, besides community structures, integration of core-periphery structures \cite{wang2024core,shen2021finding} with deep learning networks may also enhance model performance. Additionally, the line graph data corresponding to large-scale graph data tends to be complex. Simplifying the corresponding line graph data while preserving information as much as possible is also a promising direction. Finally, simultaneously training related tasks to provide auxiliary information can further enhance the effectiveness of the directed link prediction model.}
}

\ifCLASSOPTIONcaptionsoff
  \newpage
\fi

\bibliographystyle{IEEEtran}

\bibliography{mybib}

\begin{IEEEbiography}[{\includegraphics[width=1in,height=1.25in,clip,keepaspectratio]{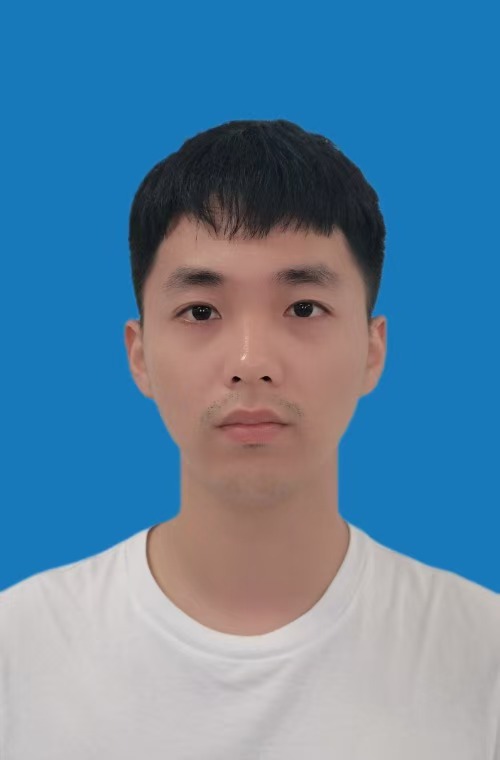}}]{Yuyang Zhang}
Yuyang Zhang received the B.Eng. degree from Taizhou University, Taizhou, China, in 2021, and the M.Eng. degrees from the Ningbo University, Ningbo, China, in 2024. His research interests include complex network analysis and machine learning.
\end{IEEEbiography}

\begin{IEEEbiography}[{\includegraphics[width=1in,height=1.25in,clip,keepaspectratio]{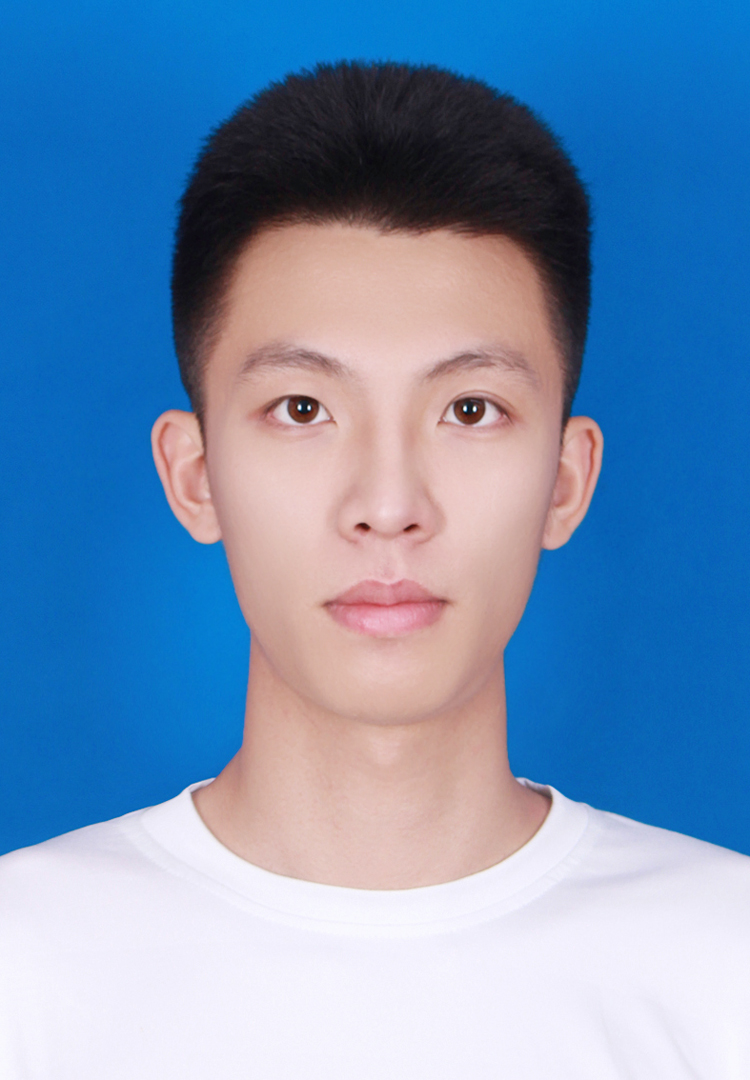}}]{Xu Shen}
received his Bachelor degree from the College of Information Science and Engineering, Ningbo University, supervised by Dr. Chengbin Peng. His research interests include data mining, graph representation learning, large language models on graphs.
\end{IEEEbiography}

\begin{IEEEbiography}[{\includegraphics[width=1in,height=1.25in,clip,keepaspectratio]{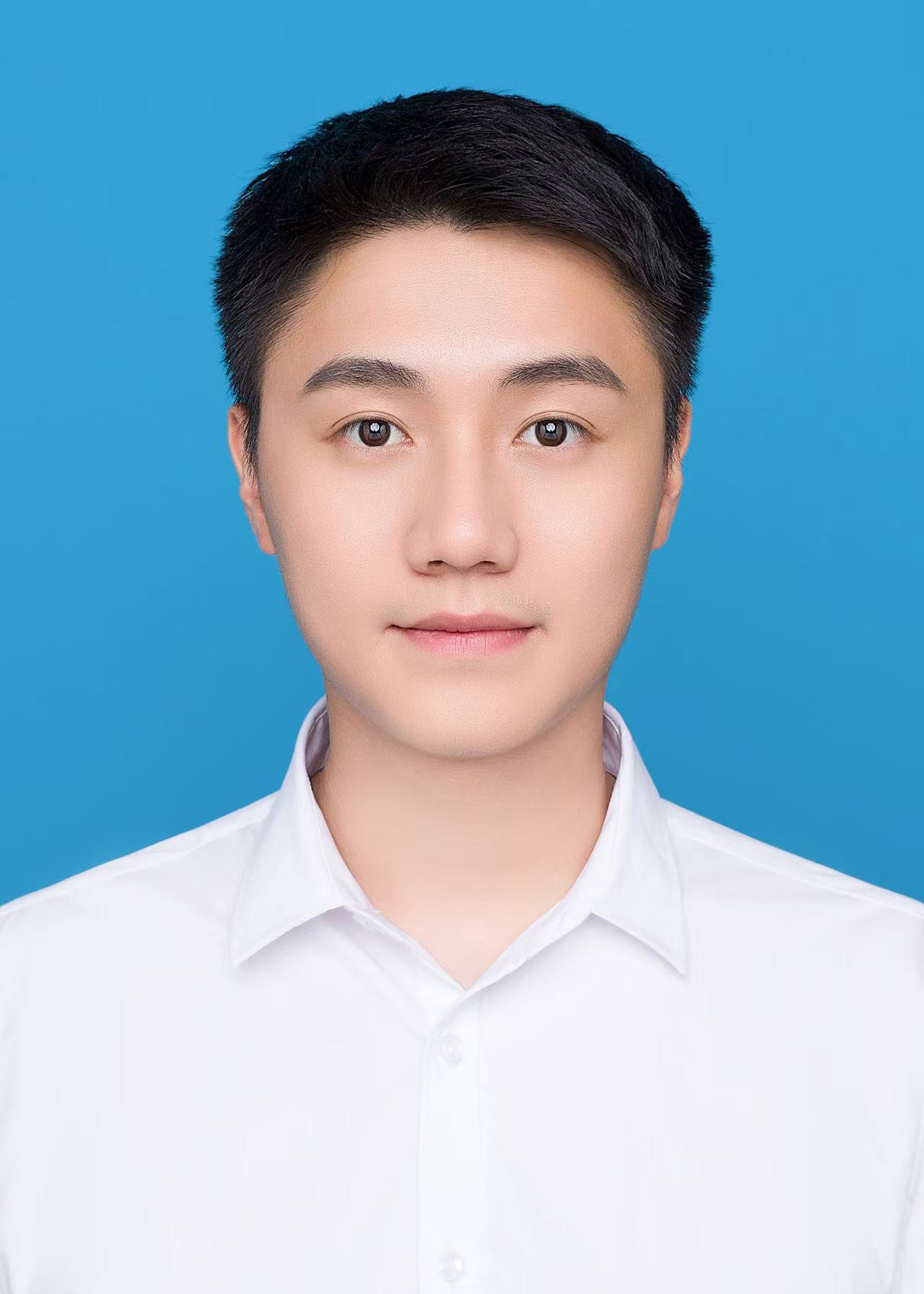}}]{Yu Xie}
received his Bachelor degree from the College of Information Science and Engineering, Ningbo University, supervised by Dr. Chengbin Peng. His interests include graph neural networks, pattern recognition, and data mining.
\end{IEEEbiography}

\begin{IEEEbiography}[{\includegraphics[width=1in,height=1.25in,clip,keepaspectratio]{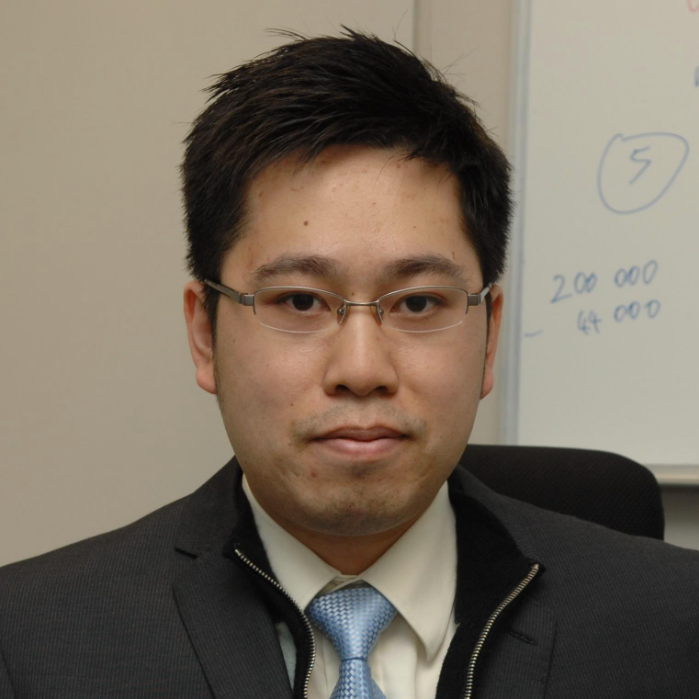}}]{Ka-chun wong}
received his B.Eng. in Computer Engineering from The Chinese University of Hong Kong in 2008. He has also received his M.Phil. degree at the same university in 2010. He received his PhD degree from the Department of Computer Science, University of Toronto in 2015. After that, he assumed his duty as assistant professor at City University of Hong Kong. His research interests include Bioinformatics, Computational Biology, Evolutionary Computation, Data Mining, Machine Learning, and Interdisciplinary Research.
\end{IEEEbiography}

\begin{IEEEbiography}[{\includegraphics[width=1in,height=1.25in,clip,keepaspectratio]{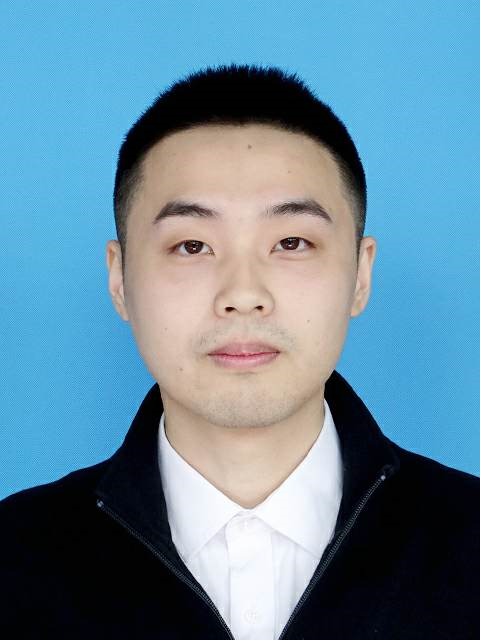}}]{Weidun Xie}
has earned his PhD in the Department of Computer Science at City University of Hong Kong. During his studies, he spent nine months as a visiting student at the University of Oxford. His research interests encompass machine learning, deep learning, and bioinformatics.
\end{IEEEbiography}

\begin{IEEEbiography}[{\includegraphics[width=1in,height=1.25in,clip,keepaspectratio]{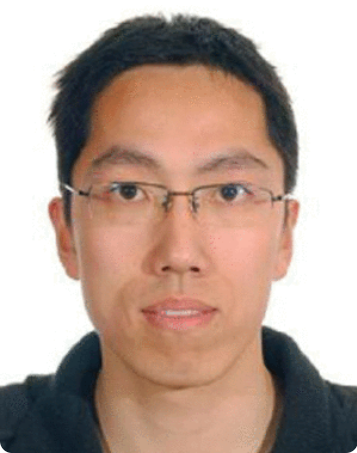}}]{Chenbin Peng (Member. lEEE)}
received the B.E. and M.S. degrees in computer science from Zhejiang University, Hangzhou, China, and the Ph.D. degree in computer science from the King Abdullah University of Science and Technology, Thuwal, Saudi Arabia, in 2015. He is currently an Associate Professor with the College of Information Science and Engineering, Ningbo University, Ningbo, China. His research interests include computer vision, complex network analysis, and semi-supervised learning.
\end{IEEEbiography}
\vfill
\end{document}